%% file: main.tex
\documentclass[letterpaper]{article} 
\usepackage{aaai2027}  
\usepackage[hyphens]{url}  
\usepackage{graphicx} 
\usepackage{natbib}  
\usepackage{caption} 
\usepackage{algorithm}
\usepackage{algorithmic}
\usepackage{subcaption}

\usepackage{amsmath}
\usepackage{amsfonts}
\usepackage[notextcomp]{stix}
\usepackage{tikz}

\usepackage{amsthm}
\theoremstyle{definition}
\newtheorem{theorem}{Theorem}
\newtheorem{proposition}{Proposition}
\newtheorem{lemma}{Lemma}
\newtheorem{corollary}{Corollary}
\newtheorem{definition}{Definition}

\newcommand{\Sol}[1]{\text{Sol}_{#1}}
\newcommand{\Mid}[2]{\text{Mid}(#1, #2)}

\usepackage{newfloat}
\usepackage{listings}
\DeclareCaptionStyle{ruled}{labelfont=normalfont,labelsep=colon,strut=off} 
\floatstyle{ruled}
\newfloat{listing}{tb}{lst}{}
\floatname{listing}{Listing}

\usepackage{booktabs}

\nocopyright

\title{A Generalized Parallelogram Rule for \\ Proportional Analogies on Riemannian Manifolds}
\author{
    Pierre-Alexandre Murena\textsuperscript{\rm 1}, 
    Marcelo Hartmann\textsuperscript{\rm 2}
}
\affiliations{
    \textsuperscript{\rm 1}Hamburg University of Technology (TUHH), Research Group on Human-Centric Machine Learning, Germany\\
    \textsuperscript{\rm 2}Department of Computer Science, University of Helsinki, Finland\\
    pierre-alexandre.murena@tuhh.de, marcelo.hartmann@helsinki.fi
}

\begin{document}

\maketitle

\begin{abstract}

\input{sections/abstract}
\end{abstract}

\begin{links}
    \link{Code}{https://github.com/ppaamm/RiemannianAnalogies}
\end{links}

\section{Introduction}
\input{sections/introduction}

\section{Proportional Analogies}
\input{sections/analogies}

\section{Riemannian Geometry}
\input{sections/riemannian}

\section{Proportional Analogies based on the Generalized Parallelogram}
\input{sections/generalized-parallelogram}

\section{Proportional Analogies on $\mathbb{R}^n$}
\input{sections/real-line}

\section{Proportional Analogies on Riemannian Symmetric Spaces}
\input{sections/symmetric-spaces}

\section{Proportional Analogies on Shape Spaces}
\input{sections/shape}

\section{Proportional Analogies on Categorical Distributions}
\input{sections/categorical-distributions}

\section{Conclusion}
\label{sec:conclusion}
\input{sections/conclusion}

\section*{Acknowledgments}

The authors would like to warmly thank Jean Lieber for the supporting discussions and feedback about this work, and Frank Nielsen for his insights on the geometry of the simplex. 

\bibliography{aaai2027}

\end{document}

%% file: sections/abstract.tex
Analogies are quaternary relations of the form ``$a$ is to $b$ as $c$ is to $d$'', usually denoted $a : b :: c : d$. This notion is formalized in particular with the notion of proportional analogy, which imposes some constraints on the valid analogies. Whereas proportional analogies have been studied mostly in symbolic domains and in vector spaces, their use is limited in non-Euclidean spaces. In this paper, we introduce a proportional analogy relation in Riemannian domains, extending the parallelogram rule used for arithmetic analogies in Euclidean spaces. We illustrate the introduced analogy on various manifolds, such as the sphere, shape spaces and manifolds of probability distributions. 

%% file: sections/introduction.tex
\begin{figure}[h]
    \centering
    \includegraphics[width=\linewidth]{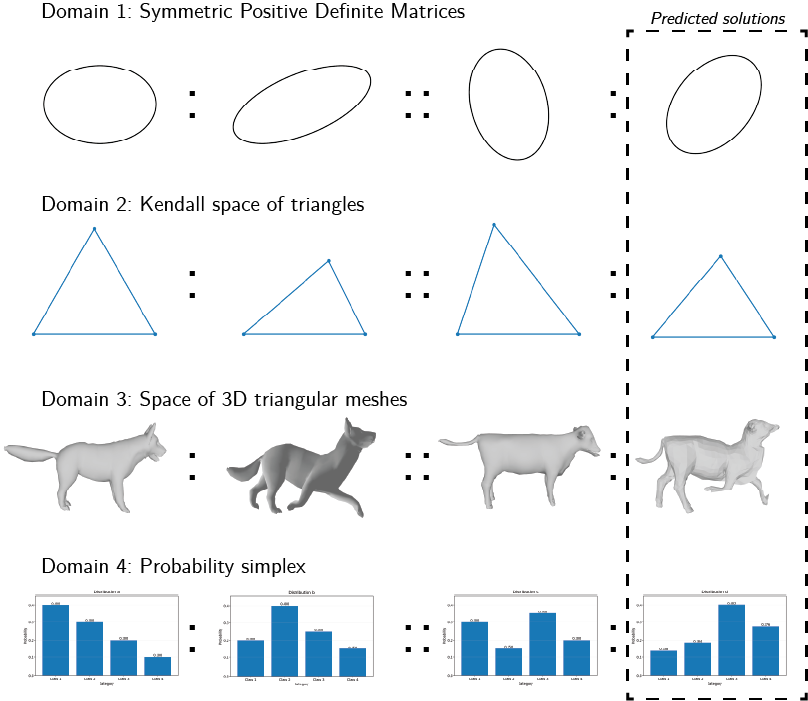}
    \caption{Analogical equations solved by our generalized parallelogram rule on Riemannian manifolds. \textbf{Domain 1} corresponds to Symmetric Positive Definite Matrices, with results presented in Corollary~\ref{corr:spd}. \textbf{Domain 2} is Kendall's space of triangular shapes, with results presented in Corollary~\ref{corr:analogies-sphere}. \textbf{Domain 3} is the space of 3D triangular meshes, using approximated geodesics and logarithmic maps. \textbf{Domain 4} is the simplex of probability distributions, endowed with the Aitchison metric, with results presented in Proposition~\ref{prop:aitchison}. The source code to generate these examples is in supplementary material, and will be published upon acceptance.}
    \label{fig:placeholder}
\end{figure}
Analogies are quaternary relations of the form ``$a$ is to $b$ as $c$ is to $d$'', usually written $a : b :: c : d$~\cite{prade2021analogical}. The capability to understand and solve analogies, i.e. to find the $d$ that makes it correct, is recognized as one of the core characteristics of human cognition~\cite{Mitchell2000AnalogyMakingAA}. 
It has also been used as a validation tool for machine learning methods. An early notable case was the use of analogies to demonstrate the validity of word embeddings such as word2vec~\cite{mikolov2013efficient}: although these models were not explicitly trained to learn analogies, it was observed that the parallelogram rule $b - a = d - c$~\cite{rumelhart1973model} applied in the learned latent space yielded valid analogies. While this finding has been widely debated~\cite{peterson2020parallelograms}, it nonetheless helped establish analogies as a tool for validation. More recently, analogies have been used to evaluate the reasoning abilities of Large Language Models (LLMs)~\cite{webb2023emergent}, and the ARC-AGI challenge considers analogy-making as a central capability of Artificial General Intelligence~\cite{chollet2019measure}.

Various models of analogies have been proposed, among which \textit{proportional analogies} have been particularly popular~\cite{lepage2004analogy}. This model is an axiomatic characterization, stating that $a : b :: c : d$ holds on some conditions of the four terms. A strength of this model is that it is consistent with the philosophical roots of analogies while applying to a large number of domains~\cite{prade2021analogical}. In particular, the parallelogram rule in vector spaces is a valid proportional analogy. However, most of the research was conducted either on symbolic domains (character strings, Boolean values, logic formulae) or in vector spaces, ignoring more complex structures such as manifolds. The preliminary work of~\citet{murena2018opening} is an attempt to extend the parallelogram rule in Riemannian manifolds, but finds two limitations: the direct generalization of the rule, based on geodesic shooting and parallel transport, does not produce a valid proportional analogy, and the general construction of proportional analogies does not guarantee continuity properties.

In this paper, we propose a different perspective. Rather than characterizing analogies through parallel displacements, we characterize them through geodesic midpoints. In Euclidean spaces, the classical parallelogram can equivalently be described by the fact that the diagonals share the same midpoint. We show that this characterization extends naturally to arbitrary Riemannian manifolds by replacing Euclidean midpoints with intrinsic geodesic midpoints. The resulting construction is entirely intrinsic, requires only exponential and logarithmic maps and satisfies the axioms of proportional analogies.
Our contributions are fourfold:
\begin{itemize}
    \item We introduce a new intrinsic definition of proportional analogies on Riemannian manifolds based on geodesic midpoints and prove that it satisfies the axioms of proportional analogies. 
    \item We establish several theoretical properties of the proposed framework, including invariance under Riemannian isometries, robustness, and explicit characterizations on geodesically convex and symmetric spaces.
    \item We extend the results of \citet{lepage2024any}, showing that any quadruple of pairwise distinct points of $\mathbb{R}^n$ can be made analogous for a suitable metric when $n \geq 2$, and proposing ordering conditions for $n = 1$.
    \item We illustrate the versatility of the framework on diverse geometric domains (Figure~\ref{fig:placeholder}), demonstrating that a single intrinsic construction naturally extends analogical reasoning far beyond Euclidean vector spaces.
\end{itemize}

%% file: sections/analogies.tex
An analogy is a quaternary relation, written ``$A : B :: C : D$" and which reads ``A is to B as C is to D". A common characterization of analogies, inspired by the philosophical works on proportions of Plato and Aristotle, relies on the satisfaction of three properties: 

\begin{definition}[Proportional analogy]
A quaternary relation of the form $a : b :: c : d$ is a proportional analogy if it satisfies the following properties for all $a, b, c, d$:
\begin{itemize}
    \item $a : b :: c : d \Leftrightarrow c : d :: a : b$ \qquad \textit{(symmetry)}
    \item $a : b :: c : d \Leftrightarrow a : c :: b : d$ \qquad \textit{(exchange of the means)}
    \item $a : b :: a : b$ \qquad \textit{(reflexivity)}
\end{itemize}
A proportional analogy is called \emph{strong} if it satisfies $a : b :: a : x \Rightarrow x = b$ for all $a$ and $b$. 
\end{definition}

From the first two postulates, 8 equivalent forms can be derived:
\begin{proposition}
\label{prop:equivalent-forms}
For a proportional analogy, the following forms are equivalent for any $a, b, c$ and $d$: 
\begin{align*}
    &a : b :: c : d \qquad a : c :: b : d \qquad b : d :: a : c \\
    &b : a :: d : c \qquad d : c :: b : a  \qquad d : b :: c : a \\
    &c : a :: d : b \qquad c : d :: a : b
\end{align*}
\end{proposition}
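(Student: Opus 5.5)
The plan is to treat the two postulates as permutations acting on the positions of the quadruple $(a,b,c,d)$ and to show that the eight listed forms are exactly the orbit of $a : b :: c : d$ under the group these permutations generate. Symmetry, $S$, sends $(a,b,c,d)\mapsto(c,d,a,b)$. Exchange of the means, $M$, sends $(a,b,c,d)\mapsto(a,c,b,d)$. Both are stated as equivalences holding for \emph{all} quadruples, so each may be applied to any instance, with any relabeling of the variables. Consequently every form reachable from $a : b :: c : d$ by a chain of $S$ and $M$ steps is equivalent to it, and it suffices to exhibit such a chain reaching each of the eight forms.

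The chain is generated by alternately applying $M$ and $S$, starting from $a : b :: c : d$:
\begin{itemize}
\item $M$ gives $a : c :: b : d$.
\item $S$ then gives $b : d :: a : c$.
\item $M$ then gives $b : a :: d : c$.
\item $S$ then gives $d : c :: b : a$.
\item $M$ then gives $d : b :: c : a$.
\item $S$ then gives $c : a :: d : b$.
\item $M$ then gives $c : d :: a : b$.
\end{itemize}
A final application of $S$ returns to $a : b :: c : d$, which closes the cycle. Each step is an instance of a postulate with renamed variables, and every step is a biconditional. Following the cycle in either direction therefore gives the equivalence of any two of the eight forms, which is exactly the claimed statement.

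The only point needing care is checking each step's bookkeeping. For an $M$ step the outer terms stay fixed and the middle two swap. For an $S$ step the two pairs swap as blocks. The main (mild) obstacle is confirming that these steps really produce the eight listed forms in the stated order.

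For context, $S$ and $M$ generate a dihedral group of order $8$: the product $SM$ has order $4$, acting as a rotation of the square with vertices $a,b,d,c$ in cyclic order. This explains why the orbit has exactly eight elements, although the proof itself only needs the explicit chain above. Reflexivity is not used.
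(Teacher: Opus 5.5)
Your proof is correct and takes the route the paper indicates: the paper gives no explicit proof, only the remark that the eight forms follow from symmetry and exchange of the means. Your alternating chain of $M$ and $S$ steps makes that derivation explicit, and each step checks out, including the final $S$ step that closes the cycle at $a : b :: c : d$. The dihedral-group remark is also accurate, though not needed.
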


In a context of vector spaces, it is easy to verify that the \emph{parallelogram rule}~\cite{rumelhart1973model} corresponds to a proportional analogy. 

\begin{proposition}
\label{prop:arithmetic}
The relation $a : b ::_{E} c : d$ on $a, b, c, d$ in a vector space $E$ holding true when $b - a = d - c$ is a proportional analogy, called \emph{arithmetic proportion}. 
\end{proposition}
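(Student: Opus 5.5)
The plan is to verify the three axioms of the definition of proportional analogy directly from the defining equation $b - a = d - c$, using only the abelian group structure of $(E, +)$. No earlier result is needed beyond the definition itself. Proposition~\ref{prop:equivalent-forms} is a consequence of the axioms, not a tool here.

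First, for \emph{symmetry}, assume $a : b ::_E c : d$, i.e. $b - a = d - c$. Since equality is symmetric, $d - c = b - a$, which is precisely $c : d ::_E a : b$. The converse follows by the same argument with the roles swapped, giving the equivalence.

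Next, for \emph{exchange of the means}, observe that $b - a = d - c$ is equivalent (adding $a + c$ to both sides and rearranging with commutativity) to the symmetric condition
\begin{equation*}
a + d = b + c .
\end{equation*}
The relation $a : c ::_E b : d$ means $c - a = d - b$, which by the same rearrangement is equivalent to $a + d = c + b$. By commutativity of addition $b + c = c + b$, so the two conditions coincide. It is convenient to state the reformulation $a : b ::_E c : d \iff a + d = b + c$ as a preliminary observation, because it makes symmetry equally immediate.

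Finally, \emph{reflexivity} is trivial: $a : b ::_E a : b$ requires $b - a = b - a$. I do not expect any real obstacle. The only point needing care is to justify each rearrangement by the group axioms (associativity, commutativity, existence of inverses), so that the argument holds in any vector space, indeed any abelian group. As a remark, the same cancellation shows the relation is strong: $b - a = x - a$ gives $x = b$.
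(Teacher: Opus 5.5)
Your proposal is correct and is the direct axiom-by-axiom check the paper has in mind; the paper writes no proof and only says the relation is easy to verify. Your reformulation $a + d = b + c$ is a useful touch: it is the shared-midpoint form of the parallelogram that the paper later generalizes to Riemannian manifolds.
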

Various interpretations can be given of this rule, for instance that $b$ differs from $a$ the same way $d$ differs from $c$. Equivalent interpretations can be given by on the equivalent forms presented in Proposition~\ref{prop:equivalent-forms}. These interpretations all consist in comparing two by two the parallel sides of the parallelogram. 

An alternative vision presented by \citet{lepage2024any} takes a different perspective on the parallelogram: instead of characterizing the parallelogram by its parallel sides, they propose to describe it in terms of its center: four points $a, b, c$ and $d$ are in analogy if and only if the middle of the segment $[a,c]$ coincides with the middle of the segment $[b, d]$. The authors use the notion of generalized mean~\cite{holder1882grenzwerthe} to propose a family of analogical proportions.

\begin{proposition}
Let $m_p$ be a generalized mean of order $p \in \mathbb{R}$ defined as $m_p(a, b) = \lim_{r \rightarrow p} \sqrt[r]{\frac{a^r + b^r}{2}}$. Then the relation $a : b ::_p : c : d$ which holds true if and only if $m_p(a, c) = m_p(b, d)$ is a proportional analogy. 
\end{proposition}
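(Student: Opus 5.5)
The plan is to reduce the relation to the arithmetic proportion of Proposition~\ref{prop:arithmetic} by a change of variables, and then check the three axioms directly on the resulting sum condition. Throughout I work on the positive reals, where every $m_p$ is defined.

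\textbf{Step 1: linearize the mean.} Set $\varphi_p(x) = x^p$ for $p \neq 0$ and $\varphi_0(x) = \log x$. Each $\varphi_p$ is a continuous, strictly monotone bijection of $(0,\infty)$ onto its image. For $p \neq 0$ the limit in the definition is just the value at $r=p$, so
\[
m_p(a,c) = \varphi_p^{-1}\!\left(\frac{\varphi_p(a)+\varphi_p(c)}{2}\right).
\]
For $p=0$ I would compute $\lim_{r\to 0}\big((a^r+c^r)/2\big)^{1/r} = \sqrt{ac}$ by taking logarithms and applying l'H\^opital (or $a^r = 1 + r\log a + o(r)$). This gives the same formula with $\varphi_0 = \log$.

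Since $\varphi_p^{-1}$ is injective, the condition $m_p(a,c)=m_p(b,d)$ becomes
\[
\varphi_p(a)+\varphi_p(c) \;=\; \varphi_p(b)+\varphi_p(d).
\]
Writing $A=\varphi_p(a)$ and similarly $B$, $C$, $D$, the relation becomes the linear statement that the pair $\{A,C\}$ has the same midpoint as the pair $\{B,D\}$. This is exactly the ``common centre of the diagonals'' description of the parallelogram recalled just before the statement. The only analytic work is this step (the $p=0$ limit); everything afterwards is bookkeeping.

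\textbf{Step 2: check the axioms on the sum condition.} Symmetry is immediate. The statement $c:d::a:b$ reads $C+A = D+B$, which is the same equation because addition and $m_p$ are symmetric in their arguments. For exchange of the means and for reflexivity, I would substitute the permuted quadruples into $A+C=B+D$. I would then compare the results with the arithmetic proportion of Proposition~\ref{prop:arithmetic} applied to the transformed quadruple, and pull back along the bijection $\varphi_p$.

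\textbf{Main obstacle: the pairing convention.} Taken literally with the letters as displayed, pairing $a$ with $c$ and $b$ with $d$, the two remaining axioms fail. Reflexivity $a:b::a:b$ would require $2A = 2B$, that is $a=b$. Exchange of the means would turn $A+C=B+D$ into $A+B=C+D$, which is a different condition. The statement therefore holds only under the reading that the quadruple is ordered as the vertices of the parallelogram, so that $[a,c]$ and $[b,d]$ are genuinely its diagonals in Lepage's convention. Under that reading the sum condition is equivalent to $B-A = D-C$ in transformed coordinates. Symmetry, exchange of the means and reflexivity then transfer verbatim from Proposition~\ref{prop:arithmetic} through $\varphi_p$. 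Strictness also follows, since $a:b::a:x$ forces $\varphi_p(x)=\varphi_p(b)$ and hence $x=b$.

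So the order of work is: (i) identify which pairs are the diagonals under the intended ordering; (ii) do Step 1, including the $p=0$ limit; (iii) check the three axioms on the linear condition. I expect (i), pinning down the pairing convention, to be the real obstacle. The literal letter order in the statement is exactly what decides whether reflexivity holds.
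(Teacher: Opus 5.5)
Your Step 1 is the right reduction, and you are right that the statement as printed is false. With $m_p(a,c)=m_p(b,d)$, the instance $a:b::a:b$ forces $a=b$. The paper gives no proof here; it quotes the result from Lepage and Couceiro. The sentence just before the statement makes the same slip about $[a,c]$ and $[b,d]$, which is probably where you picked it up.

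The gap is in your repair. No reading of the vertex order makes $A+C=B+D$ equivalent to $B-A=D-C$. It is equivalent to $B-A=C-D$, which is the arithmetic proportion $a:b::d:c$. That relation still violates reflexivity and exchange of the means, exactly as you computed yourself. Nor is the pairing $[a,c]$, $[b,d]$ Lepage's convention. For $0<a<b<c<d$, monotonicity gives $m_p(a,c)<m_p(b,c)<m_p(b,d)$, so their theorem for increasing quadruples (recalled at the end of the section on $\mathbb{R}$) would be vacuous under that pairing. As written, your last step checks the axioms for a relation you never state, via an equivalence that is false.

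The fix is to change the relation, not its reading. Define $a:b::_p c:d$ iff $m_p(a,d)=m_p(b,c)$: the mean of the extremes equals the mean of the means. This is the convention the paper uses everywhere else: Equation~\eqref{eq:riem_pa}, Figure~\ref{fig:midpoint-analogy}, and $m_g(a,d)=m_g(b,c)$ in the theorem on $\mathbb{R}$. Then $\varphi_p(a)+\varphi_p(d)=\varphi_p(b)+\varphi_p(c)$ is exactly $B-A=D-C$. Your transfer from Proposition~\ref{prop:arithmetic} then goes through unchanged, including strongness, since $a:b::a:x$ gives $A+X=B+A$.

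This is also what the paper implicitly relies on. It remarks that $m_p$ is the midpoint for the metric $g_p(x)=p^2x^{2p-2}$, and for $g_0(x)=x^{-2}$ when $p=0$. That makes your $\varphi_p$, up to an affine change, the map $\phi$ of the lemma on $\mathbb{R}$. The corrected statement is therefore the special case of Proposition~\ref{prop:riem_pa} on the uniquely geodesic manifold $((0,\infty),g_p)$. Your direct linearization reaches the same conclusion without the Riemannian machinery.
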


An important question is to solve \emph{analogical equations}, i.e. to find $x$ such that $a : b :: c : x$ for a given proportional analogy. In the following, we use the notation 
\begin{equation}
    Sol_X(a, b, c) = \lbrace d \in X \, | \, a : b :: c : d \rbrace
\end{equation}
to designate the set of solutions of the proportional analogy in the space $X$. 

%% file: sections/riemannian.tex
A \textit{topological manifold} of dimension $d$ is defined as a paracompact Haussdorf space in which every point has a neighborhood $U$ that is homeomorphic to an open subset of $\mathbb{R}^d$, called \emph{chart}. A topological manifold is called \textit{differentiable} if the mapping from one chart to the other is smooth. 

A \emph{tangent vector} $\xi$ to a differentiable manifold $\mathcal{M}$ at point $p$ is the equivalence class of the differentiable curves $\gamma$ on $\mathcal{M}$ such that $\gamma(0) = p$ modulo a first-order contact condition between the curves. The set of all tangent vectors to a point $p \in \mathcal{M}$ is denoted $T_p\mathcal{M}$ and can be shown to be a $d$-dimensional vector space. The tangent bundle $T\mathcal{M}$ is defined as the disjoint union of all tangent vectors: $T\mathcal{M} = \bigsqcup_{p\in\mathcal{M}} T_p \mathcal{M}$

The tangent space $T_p\mathcal{M}$ can be equipped with an inner product $g_p : (T_p \mathcal{M})^2 \rightarrow \mathbb{R}$. When $g_p$ varies smoothly with $p$ and it is positive, we call $(\mathcal{M}, g)$ a \emph{Riemannian manifold}. 

A connection $\nabla$ on a smooth manifold $\mathcal{M}$ is a bilinear map $\mathcal{C}^\infty(T\mathcal{M}) \times \mathcal{C}^\infty(T\mathcal{M}) \rightarrow\mathcal{C}^\infty(T\mathcal{M})$, written $\nabla_X Y$, such that $\nabla_{fX} Y = f \, \nabla_X Y$ and $\nabla_X(fY) = X[f] Y + f \,  \nabla_X Y$ for all $X, Y \in T\mathcal{M}$ and all $f \in \mathcal{C}^\infty(\mathcal{M})$.

\begin{definition}
Let $(\mathcal{M}, g)$ be a Riemannian manifold and let $\nabla$ be a connection on $\mathcal{M}$. A smooth curve $\gamma: [0, 1] \rightarrow \mathcal{M}$ is said to be a geodesic if $\nabla_{\dot{\gamma}} \dot{\gamma} = 0$. 
\end{definition}

It can be verified that, for all $p \in \mathcal{M}$ and $V \in T_p\mathcal{M}$, there exists a unique geodesic such that $\gamma(0) = p$ and $\dot{\gamma}(0) = V$. 

\begin{definition}[Exponential map]
Let $(\mathcal{M}, g)$ be a Riemannian manifold and let $\nabla$ be a connection on $\mathcal{M}$. The exponential map at a point $p \in \mathcal{M}$ is the map $\exp_p: \mathcal{D}_p \rightarrow \mathcal{M}$, with $\mathcal{D}_p \subseteq T_p\mathcal{M}$ such that for all $V \in T_p\mathcal{M}, \exp_p(V) = \gamma(1)$ where $\gamma$ is the unique geodesic such that $\gamma(0) = p$ and $\dot{\gamma}(0) = V$. When $\mathcal{D}_p = T_p\mathcal{M}$ of all $p$, we say that $\mathcal{M}$ is \emph{geodesically complete}.
\end{definition}

\begin{proposition}
\label{prop:geodesic-expmap}
Under the same conditions as the previous definition, the path $t \mapsto \exp_p(tV)$ is the geodesic starting at $p$ and with initial velocity $V$. 
\end{proposition}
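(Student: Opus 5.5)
The plan is to reduce the statement to the uniqueness of geodesics given initial data, stated just before the definition of the exponential map. Fix $p \in \mathcal{M}$ and $V \in \mathcal{D}_p$, and let $\gamma$ be the unique geodesic with $\gamma(0) = p$ and $\dot\gamma(0) = V$. I will show that $\exp_p(tV) = \gamma(t)$ for every $t$ with $tV \in \mathcal{D}_p$. Once this holds, the curve $t \mapsto \exp_p(tV)$ is $\gamma$ itself, so it is a geodesic with the required initial point and velocity.

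\textbf{Step 1 (rescaling).} For a fixed constant $t$, define $\sigma(s) = \gamma(ts)$. The chain rule gives $\dot\sigma(s) = t\,\dot\gamma(ts)$. Using $C^\infty$-linearity of $\nabla$ in the lower slot and the Leibniz rule (a constant has zero derivative), I get
\[
\nabla_{\dot\sigma}\dot\sigma(s) = t^2\,(\nabla_{\dot\gamma}\dot\gamma)(ts) = 0 .
\]
So $\sigma$ is a geodesic with $\sigma(0) = p$ and $\dot\sigma(0) = tV$. This step is the one needing care. Strictly, $\nabla_{\dot\gamma}\dot\gamma$ means the covariant derivative along the curve, not of vector fields on $\mathcal{M}$, so I will work in a local chart. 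There the geodesic equation reads $\ddot x^k + \Gamma^k_{ij}\dot x^i \dot x^j = 0$, and under $s \mapsto ts$ both terms scale by $t^2$.

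\textbf{Step 2 (uniqueness and domain).} By uniqueness, $\sigma$ is the geodesic with initial data $(p, tV)$. Hence $\exp_p(tV) = \sigma(1) = \gamma(t)$. There is also a domain issue: $\gamma$ may be defined only on a maximal interval, and the geodesics in the paper are parametrised on $[0,1]$. The rescaling shows that $tV \in \mathcal{D}_p$ exactly when $\gamma$ extends to $[0,t]$. Therefore the identity holds on the whole set $\{t : tV \in \mathcal{D}_p\}$, which is an interval containing $[0,1]$. In the geodesically complete case it is all of $\mathbb{R}$.

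Apart from the local-coordinate bookkeeping in Step 1, the argument is routine.
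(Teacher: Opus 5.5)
The paper gives no proof of this proposition; it is recalled as a standard background fact (the usual rescaling lemma for geodesics), so there is no in-paper argument to compare against. Your proof is the standard textbook one and is correct. The chart computation shows that $s \mapsto \gamma(ts)$ is a geodesic with initial data $(p, tV)$ for any affine connection, which is all the paper assumes. Uniqueness then gives $\exp_p(tV) = \gamma(t)$. Your domain discussion is also right, provided $\gamma$ is taken to be the maximal geodesic and $[0,t]$ is read as the interval between $0$ and $t$ when $t<0$. The converse direction ($tV \in \mathcal{D}_p$ implies that $\gamma$ extends to that interval) rescales the geodesic with data $(p,tV)$ by $1/t$ and uses uniqueness to glue.
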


A Riemannian manifold $(\mathcal{M}, g)$ is said to be \emph{geodesically convex} when there exists a unique geodesic between any two points on $\mathcal{M}$. In this case, we define the logarithmic map, denoted by $\log_p(q)$, as the unique vector $V \in T_p\mathcal{M}$ such that $q = \exp_p(V)$. 

\begin{definition}[Arclength]
Let $\gamma:[a,b] \rightarrow \mathcal{M}$ be a smooth curve on the Riemannian manifold $(\mathcal{M}, g)$. The \emph{arclength} of $\gamma$ from $t_1$ to $t_2$ $(t_1 \leq t_2)$ is defined as:
\begin{equation}
    L(\gamma) = \int_{t_1}^{t_2} \| \dot{\gamma}(t) \| \, dt
\end{equation}
\end{definition}

\begin{proposition}
\label{prop:length-proportion-geodesic}
Let $(\mathcal{M}, g)$ be a Riemannian manifold and let $\nabla$ be a connection on $\mathcal{M}$. Given $p\in \mathcal{M}$ and $V \in T_p\mathcal{M}$, let $\gamma_\tau: [0,1] \rightarrow \mathcal{M}$ be the geodesic defined by $\gamma_\tau(t) = \exp_p(t \tau V)$ for $t \in [0, 1]$. Then $L(\gamma_\tau) = \tau L(\gamma_1)$. 
\end{proposition}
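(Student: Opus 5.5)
The plan is to reduce the statement to the constant speed of geodesics. Put $\sigma(s) = \gamma_1(s) = \exp_p(sV)$; by Proposition~\ref{prop:geodesic-expmap} this is the geodesic with $\sigma(0)=p$ and $\dot\sigma(0)=V$. The first step is to write $\gamma_\tau$ as a reparametrization of $\sigma$: since $\gamma_\tau(t) = \exp_p(t\tau V) = \sigma(\tau t)$, the chain rule gives $\dot\gamma_\tau(t) = \tau\,\dot\sigma(\tau t)$. I will treat $\tau \geq 0$; for $\tau<0$ the same argument gives $|\tau|$, which is presumably the intended reading.

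The second step is to show that $\|\dot\sigma(s)\|$ does not depend on $s$. I would compute $\frac{d}{ds} g(\dot\sigma,\dot\sigma) = 2\,g(\nabla_{\dot\sigma}\dot\sigma,\dot\sigma) = 0$, using the geodesic equation $\nabla_{\dot\sigma}\dot\sigma = 0$. This needs the connection to be compatible with the metric, so I would take $\nabla$ to be the Levi-Civita connection. The statement only says ``a connection'', so this metric-compatibility assumption is the one point needing care, and I would state it explicitly. It follows that $\|\dot\sigma(s)\| = \|V\|$ for all $s$.

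The last step is direct integration:
\begin{equation*}
L(\gamma_\tau) = \int_0^1 \tau\,\|\dot\sigma(\tau t)\|\,dt = \tau\|V\|,
\end{equation*}
and taking $\tau=1$ gives $L(\gamma_1)=\|V\|$, so $L(\gamma_\tau)=\tau L(\gamma_1)$. Alternatively, the substitution $s=\tau t$ turns the integral into $\int_0^\tau\|\dot\sigma(s)\|\,ds$. Constant speed is still needed with this route, both to compare it with $\int_0^1$ and to cover $\tau>1$, so the substitution does not avoid the second step.
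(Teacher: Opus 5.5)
Your proof is correct and matches the paper's argument: geodesics have constant speed, so $\|\dot\gamma_\tau\| \equiv \|\tau V\|$, and integrating over $[0,1]$ gives $\tau L(\gamma_1)$. The paper's one-line proof leaves two assumptions implicit, both of which you make explicit: that $\nabla$ is metric-compatible (without it the statement actually fails), and that $\tau \geq 0$ (otherwise the factor is $|\tau|$).
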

\begin{proof}
Since $\gamma_\tau$ is a geodesic, the quantity $\|\dot{\gamma}_\tau(t)\|_g$ is constant, and equal to $\| \tau V \|_g$. Then $L(\gamma_\tau) = \sqrt{g_p(\tau V, \tau V)} = \tau \sqrt{g_p(V, V)} = \tau L(\gamma_1)$. 
\end{proof}

%% file: sections/generalized-parallelogram.tex
\subsection{Proportional Analogies on Geodesically Convex Riemannian Manifolds}

The construction process of the parallelogram used in arithmetic analogies on vector spaces can be naturally extended to Riemannian manifolds~\cite{murena2018opening}. Instead of considering the vectors $b - a$ and $d - c$, we consider the \emph{logarithmic map} $\log_a(b)$, defined as a the vector $V \in T_a\mathcal{M}$ such that $\exp_a(V) = b$. Given $a, b$ and $c$, a natural solution to compute $d$ is to compute $\log_a(b)$, to transport it to $c$ and to shoot the corresponding geodesic from $c$. However, such a process does not result in a proportional analogy, since the $d$ obtained by solving $a : b :: c : x$ and $a : c :: b : x$ is not the same. This is a consequence of the curvature of the manifold.

\begin{figure}
\centering
\begin{tikzpicture}[scale=1.1]

\coordinate (a) at (0,0);
\coordinate (b) at (2.2,1.6);
\coordinate (m) at (2.6,0.25);
\coordinate (d) at (5.2,0.4);
\coordinate (c) at (4.6,-1.1);

\draw[thick, blue]
  (a) .. controls (1.2,-0.7) and (1.9,0.1) .. (m)
      .. controls (3.3,0.4) and (4.4,0.8) .. (d);

\draw[thick, red]
  (b) .. controls (2.0,0.9) and (2.2,0.4) .. (m)
      .. controls (3.4,-0.2) and (4.0,-0.8) .. (c);

\filldraw[black] (a) circle (2pt) node[left] {$a$};
\filldraw[black] (b) circle (2pt) node[above] {$b$};
\filldraw[black] (c) circle (2pt) node[below] {$c$};
\filldraw[black] (d) circle (2pt) node[right] {$d$};
\filldraw[black] (m) circle (2pt) node[above right] {$m$};

\node[blue!70!black] at (4.2,0.9) {$\gamma_{a,d}$};
\node[red!70!black] at (3.8,-0.8) {$\gamma_{b,c}$};

\end{tikzpicture}
\caption{Analogy $a : b :: c : d$ holds since the geodesics $\gamma_{a,d}$ from $a$ to $d$ and $\gamma_{b,c}$ from $b$ to $c$ intersect at their midpoint.}
\label{fig:midpoint-analogy}
\end{figure}

In this paper, we propose an alternative characterization, following the idea of \citet{lepage2024any}. 
\begin{proposition}\label{prop:riem_pa}
Let $(\mathcal{M}, g)$ be a geodesically convex Riemannian manifold and let $\nabla$ be a connection on $\mathcal{M}$. The following relation is a strong proportional analogy on $\mathcal{M}$. We have $a : b :: c : d$ if and only if 
\begin{equation}\label{eq:riem_pa}
    \exp_a\left(\frac{1}{2}\log_a(d)\right) = \exp_b\left(\frac{1}{2}\log_b(c)\right)
\end{equation}
The proof is trivial given the following lemma:
\begin{lemma}
    Given $p, q \in \mathcal{M}$, we have $\exp_p\left(\frac{1}{2}\log_p(q)\right) = \exp_q\left(\frac{1}{2}\log_q(p)\right)$
\label{lem:middle-geodesic}
\end{lemma}
\begin{proof}
We denote by $\gamma$ the geodesic from $p$ to $q$ defined as $t \mapsto \exp_p(t\log_p(q))$. Consider the geodesic in the reverse direction: $\tilde{\gamma}(t) = \gamma(1 - t)$. Since $\dot{\tilde{\gamma}}(t) = - \dot{\gamma}(1-t)$, we have for all $t$:
\[
0 = \nabla_{\dot{\tilde{\gamma}}(t)}\dot{\tilde{\gamma}}(t) = \nabla_{- \dot{\gamma}(1-t)} (- \dot{\gamma}(1-t)) = \nabla_{\dot{\gamma}(1-t)} (\dot{\gamma}(1-t))
\]
which shows that $\tilde{\gamma}$ is a geodesic. Since  $\tilde{\gamma}(0) = q$ and $\tilde{\gamma}(1) = p$, we can conclude that $\dot{\tilde{\gamma}}(0) = \log_q(p)$ and that $\tilde{\gamma}(t) = \exp_q(t \log_q(p))$. 
\begin{align*}
    \exp_p\left(\frac{1}{2}\log_p(q)\right) 
    &= 
    \gamma\left(\frac{1}{2}\right) \\
    &= \tilde{\gamma}\left(\frac{1}{2}\right) = \exp_q\left(\frac{1}{2}\log_q(p)\right)
\end{align*}
which proves the result. 
\end{proof}
\end{proposition}

\subsection{Proportional Analogies on Riemannian Manifolds}

In the general case, there is no unicity of a geodesic and the logarithmic map is only a local operator (i.e. $\log_p(q)$ is defined for $q$ in a neighborhood of $p$). A standard example is the 2-dimensional sphere $\mathbb{S}^2$ on which there exist infinitely many geodesics between the two poles. We now propose a generalization of the proportional analogy defined in Equation~\ref{eq:riem_pa}. 

\begin{proposition}
Let $(\mathcal{M}, g)$ be a Riemannian manifold and let $\nabla$ be a connection on $\mathcal{M}$. The following relation is a proportional analogy on $\mathcal{M}$. We have $a : b :: c : d$ if and only if:
\begin{equation}\label{eq:riem_pa_general}
    \exists v \in T_a\mathcal{M}, \, \exists w \in T_b\mathcal{M}; \, \begin{cases}
        c = \exp_b(w) \\
        d = \exp_a(v) \\
        \exp_a\left(\frac{v}{2}\right) = \exp_b\left(\frac{w}{2}\right)
    \end{cases}
\end{equation}
\label{prop:general-form-analogy}
\end{proposition}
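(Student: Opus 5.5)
We verify the three axioms of a proportional analogy for the relation of Proposition~\ref{prop:general-form-analogy}. The key tool is the reversal argument from the proof of Lemma~\ref{lem:middle-geodesic}, stated without the logarithmic map. \textbf{Reversal claim:} if $q = \exp_p(v)$ for some $v \in T_p\mathcal{M}$, then there exists $\tilde v \in T_q\mathcal{M}$ with $p = \exp_q(\tilde v)$ and $\exp_q(\tilde v/2) = \exp_p(v/2)$.

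To prove the claim, let $\gamma(t) = \exp_p(tv)$ for $t \in [0,1]$; by Proposition~\ref{prop:geodesic-expmap} this is a geodesic. Set $\tilde\gamma(t) = \gamma(1-t)$. The computation in Lemma~\ref{lem:middle-geodesic} shows that $\tilde\gamma$ is again a geodesic, and only bilinearity of $\nabla$ was used there. Put $\tilde v = \dot{\tilde\gamma}(0) = -\dot\gamma(1)$. Uniqueness of geodesics with given initial data then gives $\tilde\gamma(t) = \exp_q(t\tilde v)$. Hence $\exp_q(\tilde v) = \tilde\gamma(1) = p$ and $\exp_q(\tilde v/2) = \tilde\gamma(1/2) = \gamma(1/2) = \exp_p(v/2)$.

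One subtlety is that $\exp_q$ is only defined on $\mathcal{D}_q$. Here $t\tilde v$ lies in $\mathcal{D}_q$ for $t \in [0,1]$ because the geodesic $\tilde\gamma$ exists on all of $[0,1]$, so no completeness assumption is needed.

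With the claim, the relation can be read symmetrically: $a : b :: c : d$ holds iff there are geodesics from $a$ to $d$ and from $b$ to $c$, parametrised on $[0,1]$, with the same midpoint, where either endpoint may serve as the start of each geodesic. Write $m = \exp_a(v/2) = \exp_b(w/2)$. The axioms then follow.

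\textbf{Reflexivity.} Take $w = 0 \in T_b\mathcal{M}$, so that $c = \exp_b(0) = b$. For $a : b :: a : b$ we need a geodesic from $a$ to $b$ and a geodesic from $b$ to $a$ with a common midpoint. This is where I expect the main obstacle: the statement quietly requires that some geodesic joins $a$ and $b$, which holds for instance on complete connected manifolds by Hopf--Rinow. I would therefore make that standing assumption explicit. Given it, choose $v$ with $\exp_a(v) = b$, and take the reversal $\tilde v \in T_b\mathcal{M}$ from the claim, which satisfies $\exp_b(\tilde v) = a$ and $\exp_b(\tilde v/2) = \exp_a(v/2)$.

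\textbf{Symmetry.} We need $a : b :: c : d \Leftrightarrow c : d :: a : b$. The right-hand side asks for a geodesic $c \to b$ and a geodesic $d \to a$ with a common midpoint. Reverse both geodesics from the definition of $a : b :: c : d$ using the claim; each midpoint is unchanged, so the common midpoint is still $m$. The converse direction is identical.

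\textbf{Exchange of the means.} We need $a : b :: c : d \Leftrightarrow a : c :: b : d$. The right-hand side asks for a geodesic $a \to d$ and a geodesic $c \to b$ with a common midpoint. Keep $v$ and replace $w$ by its reversal $\tilde w \in T_c\mathcal{M}$, which has the same midpoint $m$. The converse again follows by the same step.

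Finally, I would note that strongness is not claimed and indeed fails, for example with non-unique geodesics between antipodal points on the sphere, so nothing more is needed.
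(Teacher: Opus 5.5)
Your proof is correct and takes the same route the paper intends. The paper gives no separate proof of this proposition and implicitly relies on the reversal argument of Lemma~\ref{lem:middle-geodesic}; you run that argument without the logarithm, and reversing one or both geodesics while keeping their midpoint gives symmetry and exchange of the means, with appropriate care about the domain $\mathcal{D}_q$. Your caveat on reflexivity is a genuine defect of the statement, not of your proof: on $\mathbb{R}^2\setminus\{0\}$ with $b=-a$ no geodesic joins $a$ and $b$, so $a:b::a:b$ fails, and the assumption needed is exactly that every two points are joined by a $\nabla$-geodesic (Hopf--Rinow supplies this only for the Levi-Civita connection on complete connected manifolds).
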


In the general case, the analogy defined in Equation~\ref{eq:riem_pa_general} is not a strong analogy: given $a$ and $b$, we have $a : b :: a : b$ but there exists some $x \neq b$ such that $a : b :: a : x$.  

\begin{corollary}
When $(\mathcal{M}, g)$ is geodesically convex, the proportional analogies defined in Equation~\ref{eq:riem_pa} and in Equation~\ref{eq:riem_pa_general} are the same. 
\end{corollary}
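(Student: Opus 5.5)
We prove the two inclusions between the relations of Equation~\ref{eq:riem_pa} and Equation~\ref{eq:riem_pa_general}, using that geodesic convexity makes $\log_p$ a globally defined inverse of $\exp_p$: for every $q$, $\log_p(q)$ is the unique $V\in T_p\mathcal{M}$ with $\exp_p(V)=q$.

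\emph{From Equation~\ref{eq:riem_pa} to Equation~\ref{eq:riem_pa_general}.} Assume $\exp_a(\frac12\log_a(d))=\exp_b(\frac12\log_b(c))$. Set $v=\log_a(d)$ and $w=\log_b(c)$. Then $d=\exp_a(v)$ and $c=\exp_b(w)$ by definition of the logarithm, and the third condition is exactly the hypothesis. This direction is immediate.

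\emph{From Equation~\ref{eq:riem_pa_general} to Equation~\ref{eq:riem_pa}.} Take $v,w$ witnessing Equation~\ref{eq:riem_pa_general}. The key step is to identify $v$ with $\log_a(d)$ and $w$ with $\log_b(c)$. We have $\exp_a(v)=d$, and by the uniqueness clause in the definition of $\log_a$ this gives $v=\log_a(d)$; in the same way $w=\log_b(c)$. Substituting these into $\exp_a(v/2)=\exp_b(w/2)$ yields Equation~\ref{eq:riem_pa}.

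The main obstacle is the justification of $v=\log_a(d)$. The paper defines $\log_p(q)$ as ``the unique vector $V$ with $q=\exp_p(V)$'' under geodesic convexity, which literally gives injectivity of $\exp_p$. If one reads convexity only as uniqueness of geodesics, I would argue as follows. The curve $t\mapsto\exp_a(tv)$ on $[0,1]$ is a geodesic from $a$ to $d$ by Proposition~\ref{prop:geodesic-expmap}. By uniqueness of the geodesic between $a$ and $d$, it must coincide with $t\mapsto\exp_a(t\log_a(d))$ on $[0,1]$. Comparing initial velocities at $t=0$ then gives $v=\log_a(d)$. The case $d=a$ deserves a remark, since a closed geodesic loop with $v\neq 0$ would contradict uniqueness against the constant geodesic.

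Once both directions hold, the two relations coincide, which proves the corollary.
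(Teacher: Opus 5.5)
Your proof is correct and is essentially the argument the paper leaves implicit, since the paper states this corollary without proof. Under geodesic convexity, $\log_p(q)$ is by definition the unique $V$ with $\exp_p(V)=q$, so any witnesses $v,w$ in Equation~\ref{eq:riem_pa_general} are forced to equal $\log_a(d)$ and $\log_b(c)$, and the two conditions coincide; your extra argument via uniqueness of the geodesic $[0,1]\to\mathcal{M}$ from $a$ to $d$ (including the remark on $d=a$) is also sound and makes explicit why that definition of $\log$ is well posed.
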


This corollary requires that the whole manifold is geodesically convex, which is a strong assumption. We have a weaker result, stating the two proportional analogies coincide only for points where the condition on the unicity of the geodesic is satisfied. 

\begin{corollary}
Let $a, b, c, d \in \mathcal{M}$. If there exists a unique $v \in T_a\mathcal{M}$ and a unique $w \in T_b\mathcal{M}$ such that $c = \exp_b(w)$ and $d = \exp_a(v)$, then $a : b :: c : d$ if and only if $\exp_a\left(\frac{v}{2}\right) = \exp_b\left(\frac{w}{2}\right)$. 
\end{corollary}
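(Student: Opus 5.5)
The plan is to unfold the definition in Equation~\ref{eq:riem_pa_general} and show that, under the uniqueness hypothesis, the existential quantifiers collapse onto the single pair $(v,w)$. Fix $a,b,c,d$ and let $v \in T_a\mathcal{M}$ and $w \in T_b\mathcal{M}$ be the unique vectors with $d = \exp_a(v)$ and $c = \exp_b(w)$. I will prove the two implications separately.

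For the ($\Leftarrow$) direction, assume $\exp_a(v/2) = \exp_b(w/2)$. Then the triple of conditions in Equation~\ref{eq:riem_pa_general} holds with the witnesses $v$ and $w$ themselves. Hence $a : b :: c : d$ directly from the definition in Proposition~\ref{prop:general-form-analogy}.

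For the ($\Rightarrow$) direction, assume $a : b :: c : d$. The definition then provides some $v' \in T_a\mathcal{M}$ and $w' \in T_b\mathcal{M}$ with $c = \exp_b(w')$, $d = \exp_a(v')$ and $\exp_a(v'/2) = \exp_b(w'/2)$. The first two conditions show that $v'$ and $w'$ are vectors of the kind assumed unique in the hypothesis, so $v' = v$ and $w' = w$. Substituting into the third condition gives $\exp_a(v/2) = \exp_b(w/2)$.

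There is no real obstacle; the whole argument is bookkeeping about the quantifiers. The one point to state carefully is that the uniqueness hypothesis covers every vector with $\exp_a(\cdot) = d$, not just minimizing ones. If uniqueness held only among minimizing geodesics, a non-minimizing witness could satisfy the definition, and the ($\Rightarrow$) direction would fail.
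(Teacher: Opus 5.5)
Your proof is correct and matches the paper's approach. The paper gives no separate proof and treats the corollary as an immediate unfolding of the definition in Proposition~\ref{prop:general-form-analogy}: for ($\Leftarrow$) the pair $(v,w)$ serves as witnesses, and for ($\Rightarrow$) uniqueness forces any witnesses to equal $(v,w)$. Your closing remark is also right: uniqueness must hold over all preimages under $\exp$, not only minimizing ones.
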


From the way we defined the analogy, we can derive a way to solve analogical equations, i.e. to find the values of $x$ such that $a : b :: c : x$.

\begin{corollary}
Let $a, b, c \in \mathcal{M}$. If there exists a unique $v \in T_a\mathcal{M}$ and a unique $w \in T_b\mathcal{M}$ such that $c = \exp_b(w)$ and $\exp_a\left(\frac{v}{2}\right) = \exp_b\left(\frac{w}{2}\right)$, then $a : b :: c : x$ if and only if $x = \exp_a(v)$. This corresponds to $Sol_\mathcal{M}(a, b, c) = \lbrace \exp_a(v) \rbrace$.
\label{corr:solution-construction}
\end{corollary}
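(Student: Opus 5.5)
The plan is to unfold the definition of Proposition~\ref{prop:general-form-analogy} and prove both directions of the equivalence separately. The statement involves two ingredients. First, the hypothesis gives a unique pair $(v,w)\in T_a\mathcal{M}\times T_b\mathcal{M}$ with $c=\exp_b(w)$ and $\exp_a(v/2)=\exp_b(w/2)$. Second, the relation $a:b::c:x$ holds exactly when there exist $v'\in T_a\mathcal{M}$ and $w'\in T_b\mathcal{M}$ with $c=\exp_b(w')$, $x=\exp_a(v')$ and $\exp_a(v'/2)=\exp_b(w'/2)$.

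For the direction ``if'', suppose $x=\exp_a(v)$. Take $v'=v$ and $w'=w$. Then the three conditions of Equation~\ref{eq:riem_pa_general} hold immediately: $c=\exp_b(w)$ and $\exp_a(v/2)=\exp_b(w/2)$ come from the hypothesis, and $x=\exp_a(v)$ holds by assumption. Hence $a:b::c:x$.

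For the direction ``only if'', suppose $a:b::c:x$, and let $(v',w')$ be witnesses as above. The pair $(v',w')$ satisfies $c=\exp_b(w')$ and $\exp_a(v'/2)=\exp_b(w'/2)$, which are exactly the two conditions whose solution is assumed unique. Uniqueness forces $v'=v$ and $w'=w$, and therefore $x=\exp_a(v')=\exp_a(v)$. Combining the two directions gives $\Sol{\mathcal{M}}(a,b,c)=\{\exp_a(v)\}$.

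The only delicate point is how to read the uniqueness hypothesis. Read literally, it says ``a unique $v$ and a unique $w$'', which could be taken as separate uniqueness of each vector. I would interpret it as uniqueness of the pair satisfying both conditions jointly, which is what the argument uses. Under the other reading the proof still goes through. Uniqueness of $w$ with $c=\exp_b(w)$ together with the midpoint condition forces $w'=w$. For fixed $w$, uniqueness of $v$ with $\exp_a(v/2)=\exp_b(w/2)$ then forces $v'=v$. I would state this reading explicitly at the start of the proof.
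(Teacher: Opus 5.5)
Your proof is correct and is essentially the argument the paper intends. The paper gives no separate proof and treats the corollary as an immediate consequence of the definition in Proposition~\ref{prop:general-form-analogy}. Your two directions (the pair $(v,w)$ witnesses ``if'', and uniqueness of the witness pair gives ``only if'') are exactly that unfolding, and your observation that both the joint and the componentwise readings of the uniqueness hypothesis suffice is also correct.
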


\subsection{Isometry Invariance}

\begin{proposition}[Invariance under isometries]
\label{prop:invariance-isometry}
Let $(\mathcal{M},g)$ and $(\mathcal{N},h)$ be Riemannian manifolds, and let $\phi : M \to N$ be a Riemannian isometry. If $a:b::_{(\mathcal{M}, g)} c:d$
holds according to the proportional analogy defined in Equation~\ref{eq:riem_pa_general}, then $\phi(a):\phi(b)::_{(\mathcal{N}, h)} \phi(c):\phi(d)$ also holds.
\end{proposition}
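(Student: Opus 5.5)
The plan is to transport the witnesses $v,w$ of Equation~\ref{eq:riem_pa_general} through the differential of $\phi$. The key fact is that a Riemannian isometry commutes with exponential maps: for every $p\in\mathcal{M}$ and every $u\in T_p\mathcal{M}$ in the domain of $\exp_p$,
\[
\phi\left(\exp_p(u)\right) = \exp_{\phi(p)}\left(d\phi_p(u)\right).
\]
Throughout, I will take $\nabla$ and its counterpart on $\mathcal{N}$ to be the Levi-Civita connections of $g$ and $h$, since isometries preserve exactly these. For an arbitrary connection the statement can fail, so this choice is implicit in the proposition.

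First I prove the commutation identity. Let $\gamma(t)=\exp_p(tu)$, which is a geodesic by Proposition~\ref{prop:geodesic-expmap}. The Levi-Civita connection is determined by the metric through the Koszul formula, and $\phi^*h=g$, so $\phi$ maps $\nabla^g$ to $\nabla^h$: $d\phi(\nabla^g_XY)=\nabla^h_{d\phi X}d\phi Y$. Applying this along $\gamma$ shows that $\phi\circ\gamma$ satisfies $\nabla_{\dot{(\phi\circ\gamma)}}\dot{(\phi\circ\gamma)}=d\phi(\nabla_{\dot\gamma}\dot\gamma)=0$, so $\phi\circ\gamma$ is a geodesic. It starts at $\phi(p)$ with velocity $d\phi_p(u)$. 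By uniqueness of geodesics with given initial data, $\phi(\gamma(t))=\exp_{\phi(p)}(t\,d\phi_p(u))$, and evaluating at $t=1$ gives the identity. The same argument shows that the domain of $\exp_p$ is carried into the domain of $\exp_{\phi(p)}$.

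Now assume $a:b::c:d$, witnessed by $v\in T_a\mathcal{M}$ and $w\in T_b\mathcal{M}$. Set $v'=d\phi_a(v)\in T_{\phi(a)}\mathcal{N}$ and $w'=d\phi_b(w)\in T_{\phi(b)}\mathcal{N}$. By the commutation identity, $\exp_{\phi(b)}(w')=\phi(c)$ and $\exp_{\phi(a)}(v')=\phi(d)$. Since $d\phi$ is linear, $d\phi_a(v/2)=v'/2$ and $d\phi_b(w/2)=w'/2$. Therefore
\[
\exp_{\phi(a)}\left(\tfrac{v'}{2}\right)=\phi\left(\exp_a\left(\tfrac{v}{2}\right)\right)=\phi\left(\exp_b\left(\tfrac{w}{2}\right)\right)=\exp_{\phi(b)}\left(\tfrac{w'}{2}\right),
\]
which is exactly Equation~\ref{eq:riem_pa_general} for $\phi(a),\phi(b),\phi(c),\phi(d)$.

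The main obstacle is the naturality of the connection under $\phi$, which is what makes geodesics map to geodesics. I would cite the standard fact that isometries preserve the Levi-Civita connection, justified by the invariance of the Koszul formula under $\phi$. Everything after that reduces to linearity of $d\phi$.
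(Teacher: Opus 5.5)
Your proof is correct and follows the same route as the paper: push the witnesses $v,w$ forward by $d\phi$, use that an isometry maps geodesics to geodesics so that $\phi(\exp_a(tv))=\exp_{\phi(a)}(t\,d\phi_a(v))$ (and likewise at $b$), then conclude by linearity of $d\phi$. Your proof also supplies details the paper only asserts. These are the Koszul-formula justification, the observation that the connection must be Levi-Civita (the paper's ``isometries preserve geodesics'' silently assumes this), and the check that the domain of $\exp$ is carried over.
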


\begin{proof}
Assume that $a:b::_{(\mathcal{M}, g)} c:d$. There exist tangent vectors $v \in T_a\mathcal{M}$ and $w \in T_b \mathcal{M}$  such that $d = \exp_a(v)$, $c = \exp_b(w)$, and the midpoint reached from $a$ along $v$ is the same as the midpoint reached from $b$ along $w$. The geodesic $\gamma_v(t) = \exp_a(tv)$ is such that $\gamma_v(0) = a$ and $\dot{\gamma_v}(0) = v$. $\phi$ being an isometry, it preserves the geodesics, and consequently the curve $\tilde{\gamma_v}(t) = \phi(\gamma_v(t))$ is also a geodesic such that $\tilde{\gamma_v}(0) = \phi(a)$ and $\dot{\tilde{\gamma_v}}(0) = d\phi_a(v)$. Consequently, for every $t, \phi(\exp_a(t v)) = \exp_{\phi(a)}(t \, d \phi_a(v))$. The same result holds for $\gamma_w(t) = \exp_b(t w)$. Consequently, we have $\phi(d) = \exp_{\phi(a)}(d\phi_a(v))$, $\phi(c) = \exp_{\phi(b)}(d\phi_b(w))$ and $\exp_{\phi(a)}(\frac{1}{2}d\phi_a(v)) = \exp_{\phi(b)}(\frac{1}{2}d\phi_b(w))$, which proves that $\phi(a):\phi(b)::_{(\mathcal{N}, h)} \phi(c):\phi(d)$. 
\end{proof}

\subsection{Robust Proportional Analogies}

As mentioned in the introduction, most previous works on proportional analogies focus on Boolean and symbolic domains. When working on continuous domains, we may need the additional continuity property, indicating that the analogical relation can be preserved under small perturbations. In the context of a relation, this continuity property is assessed by the notion of robustness, which interprets as the resistance against noise.

\begin{definition}[Robust proportional analogy]
\label{def:continuous-ap}
Let $(X, \Delta)$ be a metric space. A proportional analogy on $X$ is called \emph{robust} in $(a, b, c, d) \in X^4$ if, for all $\varepsilon > 0$, there exists $(a^\prime, b^\prime, c^\prime, d^\prime) \neq (a, b, c, d)$ such that $\Delta(a, a^\prime) < \varepsilon$, $\Delta(b, b^\prime) < \varepsilon$, $\Delta(c, c^\prime) < \varepsilon$ and $\Delta(d, d^\prime) < \varepsilon$, and $a^\prime : b^\prime :: c^\prime : d^\prime$. 
\end{definition}

A constructive approach to analogies consists in building $d$ given $a, b$ and $c$, ensuring that $a : b :: c : d$. For instance, the arithmetic proportion (Proposition~\ref{prop:arithmetic}) can be seen as constructing $d = c + b - a$. 

\begin{definition}[Function-based proportional analogy] A proportional analogy is called \textbf{function-based} if there exists a function $f:X^3 \rightarrow X$ such that $a : b :: c : d$ if and only if $d = f(a, b, c)$. 
\end{definition}

\begin{proposition}
Let $(\mathcal{M}, g)$ be a complete, uniquely geodesic Riemannian manifold. Then the analogy defined in Proposition~\ref{prop:riem_pa} is a function-based proportional analogy.
\label{prop:riemfunction-based}
\end{proposition}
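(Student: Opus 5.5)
The plan is to write down an explicit function $f:\mathcal{M}^3\rightarrow\mathcal{M}$ and show that $a : b :: c : d$ in the sense of Equation~\ref{eq:riem_pa} holds exactly when $d = f(a,b,c)$. Guided by Corollary~\ref{corr:solution-construction}, we will use the midpoint $m = \exp_b\left(\frac{1}{2}\log_b(c)\right)$ of $b$ and $c$, and then reflect $a$ through $m$ by extending the geodesic from $a$ through $m$ to twice its length. This gives the candidate
\[
f(a,b,c) = \exp_a\left(2\log_a\left(\exp_b\left(\tfrac{1}{2}\log_b(c)\right)\right)\right).
\]

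The first step is well-definedness. Since $\mathcal{M}$ is uniquely geodesic, $\log_p(q)$ is a well-defined vector for all $p,q$, so $m$ and $\log_a(m)$ exist. Completeness (Hopf--Rinow) means $\exp_a$ is defined on all of $T_a\mathcal{M}$. In particular $\exp_a(2\log_a(m))$ is defined, and $f$ is a genuine map on $\mathcal{M}^3$.

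The second step is to show that $d=f(a,b,c)$ satisfies the analogy. Put $V=\log_a(m)$ and $d=\exp_a(2V)$. By Proposition~\ref{prop:geodesic-expmap}, $t\mapsto \exp_a(2tV)$ is a geodesic from $a$ to $d$. By uniqueness of geodesics it is the geodesic joining $a$ and $d$, so $\log_a(d)=2V$ and $\exp_a\left(\frac{1}{2}\log_a(d)\right)=\exp_a(V)=m$. This is exactly Equation~\ref{eq:riem_pa}.

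The third step is the converse: if $a : b :: c : d$, then $d=f(a,b,c)$. Equation~\ref{eq:riem_pa} gives $\exp_a\left(\frac{1}{2}\log_a(d)\right)=m$. The curve $t\mapsto\exp_a\left(\frac{t}{2}\log_a(d)\right)$ for $t\in[0,1]$ is a geodesic from $a$ to $m$, so by uniqueness $\frac{1}{2}\log_a(d)=\log_a(m)$. Hence $d=\exp_a(\log_a(d))=\exp_a(2\log_a(m))=f(a,b,c)$.

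The main obstacle is the identification $\log_a(d)=2V$ (and similarly $\tfrac12\log_a(d)=\log_a(m)$). This needs care, because uniqueness of geodesics as point sets is not quite the same as uniqueness of the initial velocity. We will rely on the paper's definition of geodesic convexity: for any $p,q$ there is a unique geodesic $\gamma:[0,1]\to\mathcal{M}$ from $p$ to $q$, and $\log_p(q)$ is the unique $V$ with $\exp_p(V)=q$. Under this definition, both identities follow because any $W$ with $\exp_a(W)=d$ generates a geodesic $t\mapsto\exp_a(tW)$ from $a$ to $d$, and such a geodesic is unique. Completeness is used only to guarantee that $\exp_a(2V)$ exists.
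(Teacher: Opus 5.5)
Your proposal is correct and uses exactly the paper's function $f(a,b,c)=\exp_a\bigl(2\log_a\bigl(\exp_b(\tfrac12\log_b(c))\bigr)\bigr)$. The paper simply states this choice of $f$. You additionally prove both directions of $a:b::c:d \Leftrightarrow d=f(a,b,c)$ using the fact that $\log_a(q)$ is the unique preimage of $q$ under $\exp_a$, and you note that completeness is needed only so that $\exp_a(2\log_a(m))$ exists; this is precisely the justification the paper leaves implicit.
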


\begin{proof}
We simply take 
\begin{equation}
    f(a,b,c) = \exp_a\left(2 \log_a\left( \exp_b\left( \frac{1}{2} \log_b(c) \right) \right) \right)
\end{equation}
\end{proof}

\begin{proposition}
Let $f: X^3 \rightarrow X$ define a function-based proportional analogy. If $f$ is continuous at $(a, b, c)$ and $(a,b,c)$ is not isolated
in $X^3$, then the analogy is robust in $(a, b, c, f(a, b, c))$. 
\label{prop:continuity-implies-robustness}
\end{proposition}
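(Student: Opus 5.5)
The plan is to build the perturbed quadruple directly from $f$, perturbing only the first three arguments and letting continuity of $f$ control the fourth. Let $\Delta$ be the metric on $X$, and equip $X^3$ with the product metric
\[
\Delta_3\bigl((x,y,z),(x',y',z')\bigr) = \max\bigl(\Delta(x,x'),\Delta(y,y'),\Delta(z,z')\bigr).
\]
Fix $\varepsilon > 0$. The goal is to find $(a',b',c',d')\neq(a,b,c,d)$ with every coordinate within $\varepsilon$ and $a':b'::c':d'$. Since the analogy is function-based, it suffices to set $d' = f(a',b',c')$; the relation $a':b'::c':d'$ then holds by definition.

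The first step uses continuity of $f$ at $(a,b,c)$. I choose $\delta>0$ such that $\Delta_3((a',b',c'),(a,b,c))<\delta$ implies $\Delta(f(a',b',c'),f(a,b,c))<\varepsilon$. I then replace $\delta$ by $\eta=\min(\delta,\varepsilon)$.

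The second step uses non-isolation. Because $(a,b,c)$ is not isolated in $X^3$, the ball of radius $\eta$ around it contains some $(a',b',c')\neq(a,b,c)$. Each of $a',b',c'$ is then within $\eta\le\varepsilon$ of $a,b,c$ respectively. Setting $d'=f(a',b',c')$, continuity gives $\Delta(d',d)<\varepsilon$, where $d=f(a,b,c)$.

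Finally, the new quadruple differs from $(a,b,c,d)$ because its first three coordinates already differ from $(a,b,c)$. This establishes robustness at $(a,b,c,f(a,b,c))$.

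There is no real obstacle here. The only points needing care are fixing the product metric, so that "not isolated" and "continuous" refer to the same topology (any of the standard equivalent product metrics works), and taking $\eta=\min(\delta,\varepsilon)$ so that the first three coordinates also satisfy the $\varepsilon$ bound.
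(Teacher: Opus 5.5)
Your proof is correct and follows the same route as the paper: perturb $(a,b,c)$ by less than $\min(\delta,\varepsilon)$ and set $d'=f(a',b',c')$, where your single $\delta$ for the max metric plays the role of the paper's $\delta_a,\delta_b,\delta_c$. You also explicitly use the non-isolation of $(a,b,c)$ to guarantee $(a',b',c')\neq(a,b,c)$, a step the paper's proof leaves implicit; as written there, the choice $a'=a$, $b'=b$, $c'=c$ would satisfy its bounds without giving a distinct quadruple.
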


\begin{proof}
Assume that $f$ is continuous in $(a, b, c)$. Let $\varepsilon > 0$. There exists $\delta_a, \delta_b, \delta_c > 0$ such that if $\Delta(a, a^\prime) < \delta_a$, $\Delta(b, b^\prime) < \delta_b$ and $\Delta(c, c^\prime) < \delta_c$, then $\Delta(f(a, b, c), f(a^\prime, b^\prime, c^\prime)) < \varepsilon$. In particular, we can find $a^\prime$ such that $\Delta(a, a^\prime) < \min \lbrace \delta_a, \varepsilon \rbrace \leq \varepsilon$, $b^\prime$ such that $\Delta(b, b^\prime) < \min \lbrace \delta_b, \varepsilon \rbrace \leq \varepsilon$ and $c^\prime$ such that $\Delta(c, c^\prime) < \min \lbrace \delta_c, \varepsilon \rbrace \leq \varepsilon$ guaranteeing that $d^\prime = f(a^\prime, b^\prime, c^\prime)$ is such that $\Delta(d, d^\prime) < \varepsilon$. By definition of function-based analogies, we have $a^\prime : b^\prime :: c^\prime : d^\prime$, which shows the robustness.   
\end{proof}

Combining the results of Propositions \ref{prop:riemfunction-based} and \ref{prop:continuity-implies-robustness}, and since the $\exp$ maps is a diffeomorphism on a Hadamard manifold, we obtain the following important result:
\begin{corollary}
On a Hadamard manifold of positive dimension, the analogy of Proposition~\ref{prop:riem_pa} is robust.
\label{corr:robustness-geodesically-convex}
\end{corollary}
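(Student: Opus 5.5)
The plan is to combine Proposition~\ref{prop:riemfunction-based} with Proposition~\ref{prop:continuity-implies-robustness}. This requires showing three things on a Hadamard manifold $(\mathcal{M}, g)$, that is, a complete, simply connected manifold with nonpositive sectional curvature: it is complete and uniquely geodesic; the map $f$ given by Proposition~\ref{prop:riemfunction-based} is continuous on $\mathcal{M}^3$; and every point of $\mathcal{M}^3$ is non-isolated. Robustness at a quadruple $(a,b,c,d)$ satisfying $a : b :: c : d$ then follows from the function-based description, since necessarily $d = f(a,b,c)$.

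\textbf{Step 1: unique geodesics.} By the Cartan--Hadamard theorem, for every $p$ the map $\exp_p : T_p\mathcal{M} \to \mathcal{M}$ is a diffeomorphism. Hence any two points are joined by a unique geodesic, so $\mathcal{M}$ is geodesically convex. The map $\log_p = \exp_p^{-1}$ is then globally defined and smooth in $q$, and Proposition~\ref{prop:riemfunction-based} applies, giving
\[
f(a,b,c) = \exp_a\left(2 \log_a\left( \exp_b\left( \tfrac{1}{2} \log_b(c) \right) \right) \right).
\]

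\textbf{Step 2: joint continuity of $f$.} This is the main obstacle. Smoothness of each $\exp_p$ separately does not give joint continuity in the base point. To handle this, I would use the full exponential map $\mathrm{Exp} : T\mathcal{M} \to \mathcal{M}$, which is smooth by smooth dependence of geodesics on initial data. Next, consider $E : T\mathcal{M} \to \mathcal{M}\times\mathcal{M}$, $E(p,v) = (p, \exp_p v)$. It is smooth and bijective by Cartan--Hadamard. At each point its differential has block-triangular form, with the identity block and the block $d(\exp_p)_v$, which is invertible because $\exp_p$ is a diffeomorphism (there are no conjugate points). By the inverse function theorem, $E$ is therefore a global diffeomorphism. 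Consequently $(p,q) \mapsto \log_p(q)$, read off from $E^{-1}$, is smooth, and so is $(p,v)\mapsto \exp_p(tv)$ for fixed $t$. The map $f$ is a composition of these smooth maps together with scalings in the fibers, so it is continuous on $\mathcal{M}^3$ with respect to the Riemannian distance, whose topology agrees with the manifold topology.

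\textbf{Step 3: non-isolation.} Since $\dim \mathcal{M} \geq 1$, $\mathcal{M}$ is a connected manifold of positive dimension, so no point of $\mathcal{M}$ is isolated. Hence no point of $\mathcal{M}^3$ is isolated, for instance by perturbing only $a$ along a short geodesic. Proposition~\ref{prop:continuity-implies-robustness} then yields robustness at $(a,b,c,f(a,b,c))$. Since every analogical quadruple has this form, the analogy is robust everywhere.
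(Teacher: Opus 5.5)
Your proposal is correct and follows the paper's route: the paper obtains the corollary by combining Proposition~\ref{prop:riemfunction-based} with Proposition~\ref{prop:continuity-implies-robustness}, using the fact that $\exp$ is a diffeomorphism on a Hadamard manifold. Your Step~2 shows that $(p,v)\mapsto(p,\exp_p v)$ is a global diffeomorphism $T\mathcal{M}\to\mathcal{M}\times\mathcal{M}$, which makes explicit the joint continuity of $f$ in the base points that the paper leaves implicit.
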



%% file: sections/real-line.tex
We begin by investigating analogies in the Euclidean space $\mathbb{R}^n$, where the geometry remains sufficiently simple to derive several general structural results. In particular, this setting allows us to characterize the realizability of analogies and highlights the role of the underlying Riemannian metric.

\subsection{Analogies on $\mathbb{R}$}

We first consider the simple case where $\mathcal{M} = \mathbb{R}$. In that case, the midpoint has a very simple characterization:
\begin{lemma}
For any $x_0 \in \mathbb{R}$, the function $\phi:x \mapsto  \int_{x_0}^x \sqrt{g(t)} \, dt$ is a diffeomorphism, and the midpoint between $x$ and $y$ with metric $g$ is:
\begin{equation}
    m_g(x, y) = \phi^{-1}\left( \frac{\phi(x) + \phi(y)}{2} \right).
    \label{eq:midpoint-realline}
\end{equation}
\end{lemma}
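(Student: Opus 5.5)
The plan is to show that $\phi$ is an isometry from $(\mathbb{R}, g)$ onto $\mathbb{R}$ with its standard Euclidean metric. Once that is established, the midpoint formula follows from the isometry invariance already proved (Proposition~\ref{prop:invariance-isometry} and the argument in its proof) together with the fact that Euclidean midpoints are arithmetic means. Throughout, $g$ is a smooth positive function on $\mathbb{R}$, namely the coefficient of the metric $g(t)\,dt^2$.

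\textbf{Step 1: $\phi$ is a diffeomorphism onto its image.} By the fundamental theorem of calculus, $\phi$ is smooth with $\phi'(x) = \sqrt{g(x)} > 0$. Hence $\phi$ is strictly increasing, and therefore injective. By the inverse function theorem, $\phi$ is a diffeomorphism onto its image, which is an open interval $I = (\phi(-\infty), \phi(+\infty))$.

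\textbf{Step 2: $\phi$ is surjective onto $\mathbb{R}$.} This is the delicate point. It requires $\int_{x_0}^{\pm\infty} \sqrt{g} = \pm\infty$, which is exactly the condition that $(\mathbb{R}, g)$ is complete. I would state this as the standing assumption implicit in the lemma. If the assumption is dropped, I would interpret the result with $\phi : \mathbb{R} \to I$. The midpoint formula still makes sense in that case, because $I$ is convex and so $(\phi(x)+\phi(y))/2 \in I$.

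\textbf{Step 3: $\phi$ is an isometry onto $(I, dt^2)$.} The pullback of the Euclidean metric under $\phi$ is $\phi'(x)^2\,dx^2 = g(x)\,dx^2$, so $\phi$ is a Riemannian isometry.

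\textbf{Step 4: the midpoint formula.} Geodesics of the Euclidean structure on the convex interval $I$ are affine segments, and these are unique. Since isometries preserve geodesics, the unique $g$-geodesic from $x$ to $y$ is
\[
t \mapsto \phi^{-1}\bigl((1-t)\phi(x) + t\phi(y)\bigr),
\]
and $(\mathbb{R}, g)$ is therefore geodesically convex. Evaluating at $t = \tfrac12$, and using $\phi(\exp_x(tv)) = \exp_{\phi(x)}(t\,d\phi_x(v))$ from the proof of Proposition~\ref{prop:invariance-isometry}, gives
\[
\exp_x\bigl(\tfrac12 \log_x y\bigr) = \phi^{-1}\bigl(\tfrac{\phi(x)+\phi(y)}{2}\bigr),
\]
which is Equation~\eqref{eq:midpoint-realline}.

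The main obstacle is Step 2, the surjectivity and completeness issue. Everything else is a direct computation.
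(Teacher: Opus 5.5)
Your proof is correct, but it takes a different route from the paper's. The paper argues directly from lengths. For $x<y$ it defines $m_g(x,y)$ by $\int_x^m\sqrt{g(t)}\,dt=\int_m^y\sqrt{g(t)}\,dt$, rewrites this as $\phi(m)-\phi(x)=\phi(y)-\phi(m)$, and inverts $\phi$. It uses only that $\phi$ is a strictly increasing diffeomorphism onto its image and that this image is an interval. You instead identify $\phi$ as an isometry $(\mathbb{R},g)\to(I,dt^2)$ and push the Euclidean geodesic through Proposition~\ref{prop:invariance-isometry}.

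The paper's version is shorter and elementary. However, it leaves implicit that the equal-arclength midpoint coincides with $\exp_x\bigl(\tfrac12\log_x y\bigr)$, which is the object appearing in Equation~\eqref{eq:riem_pa}. Your version works with that object directly. It also shows that $(\mathbb{R},g)$ is uniquely geodesic for every smooth $g$, which is what justifies reducing the general relation of Proposition~\ref{prop:general-form-analogy} to a midpoint equation in the theorem that follows.

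Your Step 2 is not actually an obstacle, and you should not impose completeness as a standing assumption. The paper reads ``diffeomorphism'' as ``diffeomorphism onto its image'' and uses exactly your fallback: the image is an interval, so it contains $(\phi(x)+\phi(y))/2$. That unrestricted version is the one needed downstream, because the necessity half of the next theorem ranges over all smooth metrics on $\mathbb{R}$, including incomplete ones.
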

\begin{proof}
$\phi$ is differentiable with $\phi^\prime(x) > 0$ for all $x$, so $\phi$ is a strictly increasing diffeomorphism from $\mathbb{R}$ to its image. Assuming $x < y$, the midpoint $m = m_g(x, y)$ is defined by:
\[
\int_x^m \sqrt{g(t)} \, dt = \int_m^y \sqrt{g(t)} \, dt 
\]
which is equivalent to $\phi(m) - \phi(x) = \phi(y) - \phi(m)$. Since the image $\phi(\mathbb{R})$ of $\phi$ is a segment, $\frac{1}{2}(\phi(x) + \phi(y)) \in \phi(\mathbb{R})$, and $m = \phi^{-1}\left( \frac{\phi(x) + \phi(y)}{2} \right).$
\end{proof}

\begin{theorem}
Let $a, b, c, d \in \mathbb{R}$ be pairwise distinct points, and $I_{x,y} = (\min\lbrace x, y \rbrace, \max\lbrace x, y \rbrace)$ the open interval between $x$ and $y$. There exists a smooth Riemannian metric $g$ on $\mathbb{R}$ such that the analogy $a : b ::_g c : d$ holds, if and only if $I_{b, c} \subset I_{a, d}$ or $I_{a, d} \subset I_{b, c}$. 
\end{theorem}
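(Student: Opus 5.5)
By the preceding lemma, $\phi(x)=\int_{x_0}^x\sqrt{g(t)}\,dt$ is a strictly increasing diffeomorphism of $\mathbb{R}$ onto an open interval $J$. In the coordinate $\phi$ the metric becomes the Euclidean metric on $J$, and geodesics in $J$ are segments. Hence $(\mathbb{R},g)$ is geodesically convex, even when it is not complete. So by Proposition~\ref{prop:riem_pa} and Lemma~\ref{lem:middle-geodesic}, $a:b::_g c:d$ holds iff $m_g(a,d)=m_g(b,c)$. By \eqref{eq:midpoint-realline} this is equivalent to
\[
\phi(a)+\phi(d)=\phi(b)+\phi(c).
\]
The whole proof reduces to this identity for an increasing diffeomorphism $\phi$. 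It is invariant under swapping $a\leftrightarrow d$ and $b\leftrightarrow c$, and so is the claimed condition. We may therefore assume $a<d$ and $b<c$ wherever convenient.

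\emph{Necessity.} Suppose the identity holds and set $\mu=\tfrac12(\phi(a)+\phi(d))=\tfrac12(\phi(b)+\phi(c))$. The images $\phi(I_{a,d})$ and $\phi(I_{b,c})$ are open intervals centred at the same point $\mu$, with radii $\tfrac12|\phi(d)-\phi(a)|$ and $\tfrac12|\phi(c)-\phi(b)|$. Two intervals with a common centre are nested, the one with the smaller radius lying inside the other. Since $\phi$ is an increasing bijection onto its image, taking preimages preserves inclusion, so $I_{b,c}\subset I_{a,d}$ or $I_{a,d}\subset I_{b,c}$. Pairwise distinctness only ensures the radii differ, but it is not needed for the conclusion.

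\emph{Sufficiency.} By the symmetry above, assume $I_{b,c}\subset I_{a,d}$; the other case is identical with the roles of the pairs exchanged. Write the points as $p_1<q_1<q_2<p_2$ with $\{p_1,p_2\}=\{a,d\}$ and $\{q_1,q_2\}=\{b,c\}$. The target identity becomes
\[
\phi(q_1)-\phi(p_1)=\phi(p_2)-\phi(q_2).
\]
I would take $g=(1+\lambda h)^2$, with $h\ge 0$ a smooth bump function not identically zero and $\lambda\ge0$, so that $\sqrt g=1+\lambda h>0$ is smooth. If $q_1-p_1<p_2-q_2$, put the support of $h$ inside $(p_1,q_1)$. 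Then the left side equals $q_1-p_1+\lambda\int h$, the right side equals $p_2-q_2$, and choosing $\lambda=(p_2-q_2-q_1+p_1)/\int h$ gives equality. The opposite case puts the bump in $(q_2,p_2)$, and the case of equal gaps takes $\lambda=0$, i.e.\ the Euclidean metric. The only points needing care are checking that such a $g$ is a genuine smooth Riemannian metric, which holds since $\sqrt g\ge1$, and the reduction to the midpoint identity. I expect the latter to be the main obstacle: one must justify uniqueness of geodesics for a possibly incomplete metric, which follows from the flat coordinate $\phi$.
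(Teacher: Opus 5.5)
Your proof is correct and essentially matches the paper's. Both reduce the analogy to $\phi(a)+\phi(d)=\phi(b)+\phi(c)$ and prove sufficiency with the same density $\sqrt{g}=1+\lambda h$, with the bump placed in the shorter of the two outer gaps; the only real difference is in necessity. There the paper argues by contrapositive over the two non-nested pairings (disjoint and interleaved intervals) using strict monotonicity of $\phi$, whereas your observation that $\phi(I_{a,d})$ and $\phi(I_{b,c})$ share the centre $\mu$ gives nestedness directly and without cases; you also justify the reduction to the midpoint identity for a possibly incomplete $g$ via the flat coordinate, which the paper leaves implicit.
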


\begin{proof}
Let $x_1<x_2<x_3<x_4$ be the four points $a,b,c,d$ arranged in increasing order. 

We first prove necessity. Suppose that $a : b ::_g c : d$ but that the intervals $I_{a,d}$ and $I_{b,c}$ are not nested. 
There are two possible pairings, up to exchanging the two pairs. 

The first possibility is  $\{a,d\}=\{x_1,x_2\}$ and $\{b,c\}=\{x_3,x_4\}$.  Since $\phi$ is strictly increasing, $\phi(x_1) + \phi(x_2) < \phi(x_3) + \phi(x_4)$ and $\phi(a) + \phi(d) \neq \phi(b) + \phi(c).$ The second possibility is $\{a, d\}=\{x_1, x_3\}$ and $\{b, c\} = \{x_2, x_4\}$.  Again, strict monotonicity gives $\phi(x_1)< \phi(x_2)$ and  $\phi(x_3)<\phi(x_4)$. Adding these inequalities yields  $\phi(x_1) + \phi(x_3) < \phi(x_2)+\phi(x_4)$, and hence  $\phi(a) + \phi(d) \neq \phi(b) + \phi(c)$.  Thus, if the two intervals are not nested, no smooth Riemannian metric can satisfy $a:b::_g c:d$. 

We now prove sufficiency. Suppose that the intervals are nested. Since the four points are pairwise distinct, the larger interval must have endpoints $x_1$ and $x_4$, while the smaller interval has endpoints $x_2$ and $x_3$. Therefore, $\{a, d\} = \{x_1, x_4\}$ and $\{b, c\} = \{x_2, x_3\}$ or conversely. It is enough to construct a smooth positive function $\rho$ such that $\int_{x_1}^{x_2}\rho(t)\,dt = \int_{x_3}^{x_4}\rho(t)\,dt$. 

Suppose first that $x_2-x_1\leq x_4-x_3$. Let $\psi\in C_c^\infty((x_1,x_2))$ be a non-negative smooth function such that $\int_{x_1}^{x_2} \psi(t) \, dt = 1$ and $\psi(t) = 0$ outside $[x_1, x_2]$. We define $\rho(t) = 1 + ((x_4 - x_3) - (x_2 - x_1)) \psi(t)$. We can check that $\int_{x_1}^{x_2} \rho(t) \, dt = \int_{x_3}^{x_4} \rho(t) \, dt$. We can build a function $\rho$ similarly in the case where $x_2 - x_1 > x_4 - x_3$. In both cases, $\rho$ is smooth, strictly positive, and can therefore be used to define the metric $g(t) = \rho(t)^2$ satisfying $m_g(a, d) = m_g(b, c)$.
\end{proof}

The generalized means considered by \citet{lepage2024any} can be interpreted as Riemannian midpoints on the positive real line. Indeed, the power mean of order $p \neq 0$ is induced by the metric $g_p(x) = p^2 x^{2 p - 2}$, while the geometric mean is induce by the metric $g_0(x) = x^{-2}$. Their existence and uniqueness theorem for $0 < a < b < c < d$ therefore establishes that, for every nested quadruple of positive points, there exists a unique metric within this particular one-parameter family that realizes the analogy. Our result characterizes existence over the full class of smooth Riemannian metrics and on the whole real line, and shows that nestedness is also necessary.

\subsection{Analogies on $\mathbb{R}^n$ for $n \geq 2$}

The case $n = 1$ presented the difficulty that the intervals had to be nested in order to have the correct properties of a metric. In the general case, there is no global order anymore, which removes the constraints on the geodesics. 

\begin{theorem}
Let $a, b, c, d \in \mathbb{R}^n$ pairwise distinct points of $\mathbb{R}^n$, with $n \geq 2$. There exists a metric $g$ such that $a : b ::_g c : d$. 
\end{theorem}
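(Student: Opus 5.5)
A natural route is to construct an explicit diffeomorphism $\Phi:\mathbb{R}^n\to\mathbb{R}^n$ and take $g=\Phi^*g_{E}$, the pullback of the Euclidean metric. The plan is to choose $\Phi$ so that the images $\Phi(a),\Phi(b),\Phi(c),\Phi(d)$ form a Euclidean parallelogram in the sense of Proposition~\ref{prop:arithmetic}, i.e.\ $\Phi(a)+\Phi(d)=\Phi(b)+\Phi(c)$. Since $\Phi$ is then an isometry from $(\mathbb{R}^n,g)$ onto the geodesically convex space $(\mathbb{R}^n,g_E)$, the manifold $(\mathbb{R}^n,g)$ is itself complete and uniquely geodesic. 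Geodesics of $g$ are the preimages of straight segments, so Equation~\ref{eq:riem_pa} for $g$ reduces to equality of Euclidean midpoints of $[\Phi(a),\Phi(d)]$ and $[\Phi(b),\Phi(c)]$. Proposition~\ref{prop:invariance-isometry}, applied to $\Phi^{-1}$, transfers the analogy back to $a,b,c,d$.

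It remains to build $\Phi$, which needs a target configuration and a transitivity statement.

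\emph{Target configuration.} Pick four pairwise distinct points $a',b',c',d'\in\mathbb{R}^n$ with $a'+d'=b'+c'$. For instance, take $a'=-e_1$, $d'=e_1$, $b'=-e_2$, $c'=e_2$, where $e_1,e_2$ are the first two standard basis vectors. These are available because $n\geq 2$.

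\emph{Transitivity.} Show that for $n\geq 2$ the diffeomorphism group of $\mathbb{R}^n$ acts transitively on ordered $4$-tuples of pairwise distinct points. This is the key step, and it fails for $n=1$: there the order of the points must be preserved, which is exactly why the previous theorem needed nestedness.

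I would prove transitivity by moving one point at a time while fixing the others. Given distinct points $p_1,\dots,p_k$ and a target $q\notin\{p_1,\dots,p_{k-1}\}$, take a smooth path from $p_k$ to $q$ that avoids $p_1,\dots,p_{k-1}$. Such a path exists because $\mathbb{R}^n$ minus finitely many points is connected when $n\geq 2$. Cover the path by small balls avoiding the other points. In each ball use a compactly supported diffeomorphism, namely the time-one flow of a compactly supported vector field $\chi(x)\,v$ with $\chi$ a bump function equal to $1$ near the segment being traversed. This slides one point to another nearby point inside the ball and is the identity elsewhere. Composing finitely many such flows gives a diffeomorphism sending $p_k$ to $q$ and fixing $p_1,\dots,p_{k-1}$.

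Applied successively to $a\mapsto a'$, then $b\mapsto b'$ fixing $a'$, and so on, this yields the required $\Phi$. This construction is the main technical obstacle, but it is the standard homogeneity lemma for connected manifolds of dimension at least $2$, and I would cite it if allowed.

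Finally, I would note that $g=\Phi^*g_E$ is smooth and positive definite because $\Phi$ is a diffeomorphism. Then $\exp$ and $\log$ for $g$ are conjugated by $\Phi$ to the Euclidean ones, which justifies applying Proposition~\ref{prop:riem_pa}. This gives $a:b::_g c:d$, and in fact the strong, uniquely determined form of the analogy.
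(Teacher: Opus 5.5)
Your proof is correct and follows essentially the same route as the paper: choose a Euclidean parallelogram with $A+D=B+C$, use the $4$-transitivity of $\mathrm{Diff}(\mathbb{R}^n)$ for $n\geq 2$ to obtain $\Phi$, pull back $g_E$, and transfer the analogy through Proposition~\ref{prop:invariance-isometry}. The only differences are that you sketch the homogeneity lemma (sliding one point at a time along a path that avoids the others), which the paper instead cites from Michor, and that you state explicitly that the invariance result is applied to $\Phi^{-1}$.
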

\begin{proof}
Consider four pairwise distinct points $A, B, C, D \in \mathbb{R}^n$ such that $A + D  = B + C$. The group $\text{Diff}(\mathbb{R}^n)$ acts $k$-transitively on $\mathbb{R}^n$ for every finite $k$ when $n \geq 2$~\cite{michor1994ntransitivity}, and in particular acts $4$-transitively. Consequently, there exists a diffeomorphism $\phi: \mathbb{R}^n \rightarrow \mathbb{R}^n$ such that $\phi(a) = A$, $\phi(b) = B$, $\phi(c) = C$ and $\phi(d) = D$. 

Let $g_E$ be the Euclidean metric on $\mathbb{R}^n$. The pullback metric $g = \phi^* g_E$ on $\mathbb{R}^n$ is defined as $g_x(u, v) = g_{E,\phi(x)}(D \phi_x(u), D\phi_x(v))$. The function $\phi$ is then an isometry from $(\mathbb{R}^n, g)$ onto $(\mathbb{R}^n, g_E)$. Since $A : B ::_{g_E} C : D$, Proposition~\ref{prop:invariance-isometry} guarantees that $a : b ::_g c : d$. 
\end{proof}

The intrinsic difference between the cases $n = 1$ and $n \geq 2$ lies in the possibility to build a diffeomorphism, indicating that the realizability of analogies depends on the transitivity properties of the diffeomorphism group of the underlying manifold.

%% file: sections/symmetric-spaces.tex
Riemannian symmetric spaces, introduced by \citet{cartan1926classe}, constitute one of the central classes of Riemannian manifolds and play a fundamental role in differential geometry and Lie theory. They include Euclidean spaces, spheres, hyperbolic spaces, Grassmann manifolds and the manifold of symmetric positive-definite matrices \cite{helgason1979differential}.

\begin{definition}[Riemannian symmetric space]
A connected Riemannian manifold $(\mathcal{M}, g)$ is symmetric if for every point $p \in \mathcal{M}$, there exists an isometry $s_p: \mathcal{M} \rightarrow \mathcal{M}$ such that $s_p(p) = p$ and $D(s_p)_p = - \text{Id}$. 
\end{definition}

From this definition, it follows that, at every point $p$ of a Riemannian symmetric space, any geodesic will be reflected, in the sense that $\gamma(t)$ is transformed into $\gamma(-t)$. This property induces an important characterization of analogies.

\begin{theorem}
Let $(\mathcal{M}, g)$ be a Riemanian symmetric space. For all $a, b, c, d \in \mathcal{M}$, we have $a : b ::_g c : d$ if and only if there exists $m \in \text{Mid}(b, c)$ such that $d = s_m(a)$, where:
\begin{equation}
    \text{Mid}(b, c) = \left \lbrace \exp_b\left( \frac{w}{2}\right) : w \in T_bM, \exp_b(w) = c \right\rbrace.
\end{equation}
\label{thm:characterization-symmetric-spaces}
\end{theorem}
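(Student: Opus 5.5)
The plan is to prove both directions directly from the general definition in Proposition~\ref{prop:general-form-analogy}. The one geometric input is the behaviour of the symmetry $s_m$ on geodesics through $m$. Since $s_m$ is an isometry, it maps geodesics to geodesics; the proof of Proposition~\ref{prop:invariance-isometry} gives $s_m(\exp_m(u)) = \exp_m(D(s_m)_m u)$. With $D(s_m)_m = -\text{Id}$ this becomes $s_m(\exp_m(u)) = \exp_m(-u)$ for all $u \in T_m\mathcal{M}$. We assume geodesic completeness, which holds for symmetric spaces, so that $\exp_m$ is defined on all of $T_m\mathcal{M}$. We also note that $s_m$ is an involution: $s_m\circ s_m$ is an isometry fixing $m$ with differential $\text{Id}$, so it is the identity because $\mathcal{M}$ is connected.

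\textbf{($\Rightarrow$)} Suppose $a : b ::_g c : d$. Take $v \in T_a\mathcal{M}$ and $w \in T_b\mathcal{M}$ as in Equation~\ref{eq:riem_pa_general}, and set $m = \exp_a(v/2) = \exp_b(w/2)$. Then $m \in \Mid{b}{c}$ by definition. Let $\gamma(t) = \exp_a(tv)$ for $t\in[0,1]$, and set $u = \dot\gamma(1/2) \in T_m\mathcal{M}$. By uniqueness of geodesics with given initial data (reparametrize $\gamma$ starting at $t=1/2$), we have $\gamma(1/2 + s) = \exp_m(su)$. Hence $d = \gamma(1) = \exp_m(u/2)$ and $a = \gamma(0) = \exp_m(-u/2)$. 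Applying the reflection identity gives $s_m(a) = \exp_m(u/2) = d$.

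\textbf{($\Leftarrow$)} Suppose $m \in \Mid{b}{c}$ with $m = \exp_b(w/2)$ and $c = \exp_b(w)$, and suppose $d = s_m(a)$. We need some $v\in T_a\mathcal{M}$ with $\exp_a(v/2) = m$ and $\exp_a(v) = d$. The key step, and the main obstacle, is producing a geodesic from $a$ to $m$: the definition of the analogy only requires some geodesic, but without geodesic convexity we must show one exists. Here completeness is essential, via Hopf--Rinow: a connected complete Riemannian manifold has a minimizing geodesic between any two points. So choose $u \in T_m\mathcal{M}$ with $\exp_m(-u/2) = a$, and let $\sigma(s) = \exp_m(su)$. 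Then $\sigma(-1/2) = a$, and $\sigma(1/2) = \exp_m(u/2) = s_m(\exp_m(-u/2)) = s_m(a) = d$. Define $v = \dot\sigma(-1/2)\in T_a\mathcal{M}$. The curve $t\mapsto\sigma(t-1/2)$ is then the geodesic $t\mapsto\exp_a(tv)$, so $\exp_a(v/2) = \sigma(0) = m = \exp_b(w/2)$ and $\exp_a(v) = \sigma(1/2) = d$. Together with $c = \exp_b(w)$, these are exactly the conditions of Equation~\ref{eq:riem_pa_general}, so $a : b ::_g c : d$.

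The remaining work is routine. We check the reparametrization identities for geodesics, which follow from Proposition~\ref{prop:geodesic-expmap} and uniqueness of geodesics. We also state explicitly that completeness (from homogeneity of symmetric spaces) is what justifies both the global $\exp_m$ and the Hopf--Rinow step.
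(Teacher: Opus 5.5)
Your proof is correct and follows essentially the same route as the paper. In both directions you recentre the geodesic from $a$ to $d$ at $m$ and apply $s_m(\exp_m(u))=\exp_m(-u)$; the paper's $\tilde\gamma(t)=\gamma(t+1/2)$ and $\exp_m((1-2t)v)$ are your $\exp_m(su)$ up to reparametrization. The only differences are that you derive the reflection identity from the isometry-invariance argument and cite Hopf--Rinow explicitly where the paper just says ``by completeness'', and your involution remark is never used.
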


\begin{proof}
We first prove the forward implication. Assume that $a : b ::_g c : d$. Then there exists $v \in T_a\mathcal{M}$ and $w \in T_b\mathcal{M}$ satisfying the conditions presented in Proposition~\ref{prop:general-form-analogy}. Define $m = \exp_a(v/2) = \exp_b(w/2)$. In particular, we observe that $m \in \text{Mid}(b, c)$. Consider the geodesic $\gamma(t) = \exp_a(tv)$ for $t \in \mathbb{R}$. It satifies $\gamma(0) = a$, $\gamma(1/2) = m$ and $\gamma(1) = d$. We reparametrize this geodesic as $\tilde{\gamma}$ defined by $\tilde{\gamma}(t) = \gamma(t + 1/2)$. By construction, $\tilde{\gamma}$ is a geodesic through $m$ such that $\tilde{\gamma}(0) = m$, $\tilde{\gamma}(-1/2) = a$ and $\tilde{\gamma}(1/2) = d$. By definition of the geodesic symmetry, we have $s_m(\tilde{\gamma(t)}) = \tilde{\gamma}(-t)$ for all $t$. In particular for $t = -1/2$, we obtain $d = \tilde{\gamma}(1/2) = s_m(\tilde{\gamma}(-1/2)) = s_m(a)$.  

For the reverse implication, we consider $m \in \text{Mid}(b,c)$ such that $d = s_m(a)$.
Since a Riemannian symmetric space is geodesically complete, there exists $v \in T_m\mathcal{M}$ such that $a = \exp_m(v)$. By definition of the symmetry, we have $s_m(\exp_m(v)) = \exp_m(-v)$. Hence $d = s_m(a) = \exp_m(-v)$. 
Consider the curve $\gamma(t) = \exp_m((1 - 2t)v)$ for $t \in \mathbb{R}$. It is an affinely parametrized geodesic, and it satisfies $\gamma(0) = a$, $\gamma(1/2) = m$ and $\gamma(1) = d$. Denoting $u = \dot{\gamma}(0) \in T_a\mathcal{M}$, $\gamma$ is the geodesic starting at $a$ and with initial velocity $u$, so $d = \exp_a(u)$ and $m = \exp_a(u/2)$. Since $m = \exp_b(w/2)$ and $c = \exp_b(w)$, we have $a : b ::_g c : d$ by Proposition~\ref{prop:general-form-analogy}.
\end{proof}

A direct application of this result is the Euclidean case. We can observe that $(\mathbb{R}^n, g_E)$ is a Riemanian symmetric space with $s_m(a) = 2 m - a$. When $m$ is the midpoint between $b$ and $c$, we have $m = \frac{b + c}{2}$, and we retrieve the parallelogram rule $d = c + b - a$. 

\begin{proposition}[Robustness on Riemannian symmetric spaces]
Let $(\mathcal{M},g)$ be a Riemannian symmetric space of positive dimension. The proportional analogy defined in Theorem~\ref{thm:characterization-symmetric-spaces} is robust at every $(a,b,c,d)\in \mathcal{M}^4$ such that $a:b::_{(\mathcal{M},g)}c:d$:
\end{proposition}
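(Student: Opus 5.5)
We need to show: given $a:b::c:d$ on a symmetric space, for every $\varepsilon>0$ there is a different quadruple $\varepsilon$-close (in the Riemannian distance $\Delta$) that is also in analogy. Robustness only asks for \emph{some} nearby distinct analogous quadruple, not continuity of a solution map. So the plan is to perturb one point along a controlled direction and recompute the rest through the characterization of Theorem~\ref{thm:characterization-symmetric-spaces}, without needing uniqueness of midpoints.

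\textbf{Step 1 (set-up).} By Theorem~\ref{thm:characterization-symmetric-spaces}, take $m\in\text{Mid}(b,c)$ with $d=s_m(a)$. Here $m=\exp_b(w/2)$ and $c=\exp_b(w)$ for some $w\in T_bM$.

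\textbf{Step 2 (perturb $a$ only).} Since $\dim\mathcal{M}>0$, there are points $a'\neq a$ arbitrarily close to $a$; for instance $a'=\exp_a(tu)$ with $u\neq 0$ and $t$ small. Keep $b'=b$, $c'=c$ and the same midpoint $m$, and set $d'=s_m(a')$. By the reverse implication of Theorem~\ref{thm:characterization-symmetric-spaces}, $a':b::c:d'$ holds, and $(a',b,c,d')\neq(a,b,c,d)$ because $a'\neq a$.

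\textbf{Step 3 (closeness).} $s_m$ is an isometry, so
\[
\Delta(d,d')=\Delta(s_m(a),s_m(a'))=\Delta(a,a').
\]
Choosing $a'$ with $\Delta(a,a')<\varepsilon$ therefore gives all four distance conditions of Definition~\ref{def:continuous-ap}. This is cleaner than passing through Proposition~\ref{prop:continuity-implies-robustness}, which would need a well-defined continuous $f$.

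\textbf{Main obstacle.} The only subtle point is justifying that $s_m$ is a global isometry, so it preserves the distance $\Delta$ and not just the metric tensor, and that the reverse direction of the theorem applies for an arbitrary $m\in\text{Mid}(b,c)$ with an arbitrary $a'$. Both come directly from the definition of a symmetric space and the proof already given, since that proof used only geodesic completeness to write $a'=\exp_m(v')$. We also need $\Delta$ to be a genuine metric with non-isolated points; this follows because $\mathcal{M}$ is a connected Riemannian manifold of positive dimension.
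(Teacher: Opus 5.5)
Your proposal is correct and follows the paper's proof essentially step for step. You fix $m\in\text{Mid}(b,c)$ with $d=s_m(a)$, perturb only $a$ to some $a'\neq a$ with $\Delta(a,a')<\varepsilon$, and keep $b'=b$, $c'=c$ and the same $m$ while setting $d'=s_m(a')$. The reverse direction of Theorem~\ref{thm:characterization-symmetric-spaces} then gives the analogy, and the fact that $s_m$ is an isometry gives $\Delta(d,d')=\Delta(a,a')<\varepsilon$.
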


\begin{proof}
Assume that $a:b::_{(\mathcal{M},g)}c:d$. By Theorem~\ref{thm:characterization-symmetric-spaces}, there exists $m\in\operatorname{Mid}(b,c)$ such that $d=s_m(a)$.

Let $\varepsilon>0$. Since $\mathcal{M}$ has positive dimension, there exists $a'\neq a$ such that $\operatorname{dist}(a,a')<\varepsilon$. Set
\[
b'=b,\qquad c'=c,\qquad d'=s_m(a').
\]
Since $m\in\operatorname{Mid}(b',c')$, Theorem~\ref{thm:characterization-symmetric-spaces}
implies $a':b'::_{(\mathcal{M},g)}c':d'$. Moreover, $s_m$ is a Riemannian isometry, and hence
\[
\operatorname{dist}(d,d') = \operatorname{dist}\bigl(s_m(a),s_m(a')\bigr) = \operatorname{dist}(a,a') < \varepsilon.
\]
The other two distances are zero:
\[
\operatorname{dist}(b,b')=\operatorname{dist}(c,c')=0.
\]
Finally, $(a',b',c',d')\neq(a,b,c,d)$ because $a'\neq a$.
Therefore, the analogy is robust at $(a,b,c,d)$.
\end{proof}

This result shows that valid analogical quadruples are not isolated: every neighborhood of a valid quadruple contains another valid quadruple. Corollary~\ref{corr:robustness-geodesically-convex} does not provide a strong enough characterization here: the proposition states that the space does not need to be a Hadamard manifold to ensure robustness of the analogy when it is symmetric.

\subsection{Analogies on Spheres}

The sphere $\mathbb{S}^n \subset \mathbb{R}^{n+1}$ is defined as the set $\mathbb{S}^n = \lbrace x \in \mathbb{R}^{n+1} \, ; \, \|x\|_E^2 = 1 \rbrace$. It can be verified that the sphere $\mathbb{S}^n$ is a Riemannian symmetric space, with $s_m(x) = 2 \langle m, x \rangle m - x$, where $\langle., .\rangle$ designates the Euclidean scalar product in the ambient space $\mathbb{R}^{n+1}$~\cite{helgason1979differential}.

\begin{corollary}
The solutions of the analogical equation $a : b :: c : x$ on the sphere $\mathbb{S}^n$ are:
\begin{equation}
\Sol{\mathbb{S}^n} = \big\lbrace 2 \langle m, a\rangle m - a \, : \, m \in \Mid{b}{c} \big\rbrace
\end{equation}
\label{corr:analogies-sphere}
\end{corollary}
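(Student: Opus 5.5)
The corollary follows directly from Theorem~\ref{thm:characterization-symmetric-spaces} once two facts about the sphere are in place: $\mathbb{S}^n$ is a Riemannian symmetric space, and its geodesic symmetry at $m$ is $s_m(x) = 2\langle m, x\rangle m - x$. The plan is to check these two facts and then substitute.

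First, I will verify that $s_m$ is a valid geodesic symmetry at each $m \in \mathbb{S}^n$. The map $s_m$ is the restriction to $\mathbb{S}^n$ of the linear map $R_m = 2mm^\top - I$, which is the reflection through the line $\mathbb{R}m$. Since $R_m$ is orthogonal, it preserves $\mathbb{S}^n$ and the induced round metric, so $s_m$ is a Riemannian isometry. Next, $s_m(m) = 2\|m\|^2 m - m = m$. For a tangent vector $u \in T_m\mathbb{S}^n = m^\perp$, the differential is $D(s_m)_m(u) = R_m u = 2\langle m,u\rangle m - u = -u$, so $D(s_m)_m = -\text{Id}$. The sphere is connected for $n\ge 1$, so the definition of a symmetric space is satisfied. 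For $n=0$ the two-point sphere is disconnected; I would either assume $n \geq 1$ or note that the formula still holds trivially there.

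Second, I will apply Theorem~\ref{thm:characterization-symmetric-spaces}. That theorem says $a:b::c:x$ holds if and only if $x = s_m(a)$ for some $m\in\Mid{b}{c}$. Substituting the explicit form of $s_m$ gives $x = 2\langle m,a\rangle m - a$, and collecting all such $x$ yields exactly the set $\Sol{\mathbb{S}^n}(a,b,c)$ in the statement.

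The only step that needs any care is confirming that the symmetry is exactly this reflection and not just some isometry fixing $m$. The differential computation above settles this: an isometry is determined by its value and differential at one point, so $s_m$ is the unique geodesic symmetry at $m$. The substitution itself is routine.

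Optionally, $\Mid{b}{c}$ can be made explicit. If $b \neq \pm c$, it is the single point $(b+c)/\|b+c\|$. If $c=b$, it contains $b$ together with points reached along great circles through $b$, such as $-b$. If $c=-b$, it is the whole great sphere $b^\perp\cap\mathbb{S}^n$. The statement does not require this description.
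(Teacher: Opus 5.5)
Your argument is correct and takes the paper's route: the paper gets the corollary by substituting the sphere's geodesic symmetry $s_m(x)=2\langle m,x\rangle m-x$ into Theorem~\ref{thm:characterization-symmetric-spaces}. The only difference is that you verify this symmetry directly, whereas the paper cites Helgason for it.

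Two side remarks are inaccurate, though neither affects the main proof.
\begin{itemize}
\item The formula does not hold ``trivially'' for $n=0$. Take $a=1$, $b=c=-1$: the formula returns $\{1\}$, yet no analogy holds, because every exponential map on $\mathbb{S}^0$ is trivial. So you must assume $n\geq 1$.
\item Since $\Mid{b}{c}$ allows non-minimizing geodesics, it equals $\{\pm(b+c)/\|b+c\|\}$ for $b\neq\pm c$, not the single point $(b+c)/\|b+c\|$. For $c=b$ it is exactly $\{\pm b\}$. This changes nothing because $s_{-m}=s_m$.
\end{itemize}
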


When $b$ and $c$ are not antipodal (i.e. when $c \neq -b$), the set of midpoints contains two elements: $\Mid{b}{c} = \left\lbrace \frac{b + c}{\|b+c\|}, -\frac{b + c}{\|b+c\|} \right\rbrace$. Both midpoints induce the same geodesic symmetry, and the analogical equation therefore admits the unique solution $d = \frac{\langle a, b + c \rangle}{1 + \langle b, c \rangle} (b + c) - a$.

When $b = -c$, the midpoint is not unique anymore: $\Mid{b}{-b} = \lbrace m \in \mathbb{S}^n \, : \, \langle m, b \rangle = 0 \rbrace$, and $\Sol{\mathbb{S}^n} = \lbrace d \in \mathbb{S}^n \, : \, \langle d, b \rangle = - \langle a, b \rangle \rbrace$. In particular, in the case where $n = 2$, this corresponds to the points of azimuthal angle $\theta = \pi - \theta_a$, with $\theta_a$ the azimuthal angle of $a$.  

The sphere $\mathbb{S}^n_R$ of radius $R$ has symmetry $s_m(x) = \frac{2}{R^2} \langle m, x \rangle m - x$. 

\begin{corollary}
The solutions of the analogical equation $a : b :: c : x$ on the sphere $\mathbb{S}^n$ are:
\begin{equation}
\Sol{\mathbb{S}^n_R} = \left\lbrace \frac{2}{R^2} \langle m, a\rangle m - a \, : \, m \in \Mid{b}{c} \right\rbrace
\end{equation}
\label{corr:analogies-sphere-radius}
\end{corollary}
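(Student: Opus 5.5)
The corollary follows from Theorem~\ref{thm:characterization-symmetric-spaces}. By that theorem, the solution set is $\Sol{\mathbb{S}^n_R}(a,b,c) = \{ s_m(a) : m \in \Mid{b}{c}\}$, where $s_m$ is the geodesic symmetry of $\mathbb{S}^n_R$ at $m$. It therefore suffices to identify $s_m$ on the sphere of radius $R$. I will check that the map $s_m(x) = \frac{2}{R^2}\langle m, x\rangle m - x$ meets the defining properties of the symmetry at $m$, namely an isometry fixing $m$ whose differential at $m$ is $-\mathrm{Id}$. This map is the restriction to $\mathbb{S}^n_R$ of a linear map of $\mathbb{R}^{n+1}$.

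First I check the algebraic facts. Write $P = \frac{2}{R^2} m m^\top - I$, where $\|m\|^2 = R^2$. Then $P$ is the reflection through the line spanned by $m$. It is symmetric with $P^2 = I$, hence orthogonal. So $P$ preserves the Euclidean norm, maps $\mathbb{S}^n_R$ onto itself, and its restriction is an isometry for the induced (round) metric. Next, $P m = 2m - m = m$, so $m$ is fixed. For $u \in T_m\mathbb{S}^n_R = m^\perp$ we get $Pu = -u$, since $\langle m,u\rangle = 0$. As $P$ is linear, its differential at $m$ is the restriction of $P$ to $m^\perp$, which is $-\mathrm{Id}$.

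Second, I need this map to be \emph{the} symmetry $s_m$ used in the theorem. The theorem's proof only uses that $s_m$ reverses geodesics through $m$, i.e.\ $s_m(\exp_m(v)) = \exp_m(-v)$. This holds for any isometry $\phi$ with $\phi(m)=m$ and $D\phi_m = -\mathrm{Id}$: an isometry maps geodesics to geodesics (as in the proof of Proposition~\ref{prop:invariance-isometry}), so $\phi(\exp_m(tv)) = \exp_m(t\,D\phi_m v) = \exp_m(-tv)$. The sphere is complete, so every point is $\exp_m(v)$ for some $v$, and the isometry is thereby uniquely determined. One can also check this directly from the explicit formula $\exp_m(v) = \cos(\|v\|/R)\,m + R\sin(\|v\|/R)\,v/\|v\|$ together with $Pm=m$ and $Pv=-v$.

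Finally, substituting $s_m(a) = \frac{2}{R^2}\langle m,a\rangle m - a$ into the theorem's characterization gives the stated set. No step is a real obstacle. The one point that needs care is the second step: the theorem's $s_m$ must be identified with this explicit reflection, and that relies on the uniqueness argument for isometries with prescribed value and differential at a point. As a sanity check, setting $R=1$ recovers Corollary~\ref{corr:analogies-sphere}. One could also reduce the whole statement to the unit sphere through the homothety $x \mapsto x/R$, which maps geodesics to geodesics and midpoints to midpoints, but the direct verification above is simpler.
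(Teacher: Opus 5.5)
Your proof is correct and follows the paper's route: the corollary is obtained by substituting the geodesic symmetry $s_m(x) = \frac{2}{R^2}\langle m, x\rangle m - x$ of $\mathbb{S}^n_R$ into Theorem~\ref{thm:characterization-symmetric-spaces}. The paper simply asserts this formula for $s_m$. You additionally verify that it is an isometry fixing $m$ with differential $-\mathrm{Id}$, and hence that it reverses geodesics through $m$, which fills in the one step the paper leaves implicit.
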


\subsection{Analogies on Hyperbolic Spaces}

Define the $n$-dimensional hyperboloid model as 
\[
\mathbb{H}^n = \lbrace x \in \mathbb{R}^{n+1} \, : \, \langle x, x \rangle_L = -1, \, x_0 > 0 \rbrace
\]
where $\langle ., . \rangle_L$ is the Lorentzian inner product defined as $\langle x, y \rangle_L = -x_0 y_0 + \sum_{i=1}^n x_i y_i$. The Riemannian metric on $\mathbb{H}^n$ is induced from the Lorentzian metric. Since all standard models of hyperbolic space are mutually isometric, working on $\mathbb{H}^n$ entails no loss of generality. We refer the reader to \citet{ratcliffe2006foundations} for more details on hyperbolic spaces.

It can be verified that the hyperboloid $\mathbb{H}^n$ is a Riemannian symmetric space with $s_m(x) = - 2 \langle x, m \rangle_L m  - x$. 

Unlike the sphere, hyperbolic space is a Hadamard manifold. Consequently, any two points are joined by a unique geodesic, and therefore admit a unique geodesic midpoint. For any $b, c \in \mathbb{H}^n$, the unique midpoint is $m = \frac{b + c}{\sqrt{2 - 2 \langle b, c \rangle_L}}$. 

\begin{corollary}
The analogical equation $a : b :: c : x$ on the hyperboloid $\mathbb{H}^n$ admits a unique solution:
\begin{equation}
\Sol{\mathbb{H}^n} = \left\lbrace - a + \frac{\langle a, b+c \rangle_L}{1 - \langle b, c \rangle_L} (b + c) \right\rbrace
\end{equation}
\end{corollary}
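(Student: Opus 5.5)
We will derive this directly from Theorem~\ref{thm:characterization-symmetric-spaces}, in the same way as Corollary~\ref{corr:analogies-sphere} for the sphere. The argument has two ingredients. First, $\mathbb{H}^n$ is a Hadamard manifold, so $\text{Mid}(b,c)$ is a single point $m$; the solution set is then exactly $\{s_m(a)\}$. Second, we need the explicit formula for $m$, namely $m = \frac{b+c}{\sqrt{2-2\langle b,c\rangle_L}}$, which we then substitute into $s_m(x) = -2\langle x,m\rangle_L m - x$.

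For uniqueness, we argue as follows. Since $\mathbb{H}^n$ is geodesically convex, Corollary~\ref{corr:solution-construction} applies. Concretely, $\text{Mid}(b,c)=\{\exp_b(\tfrac12\log_b c)\}$ is a singleton, so by Theorem~\ref{thm:characterization-symmetric-spaces} we get $\Sol{\mathbb{H}^n}(a,b,c)=\{s_m(a)\}$.

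Next we verify the midpoint formula; this is the only step that needs real checking. Put $\lambda = 2-2\langle b,c\rangle_L$. Since $\langle b,c\rangle_L = -\cosh\operatorname{dist}(b,c) \le -1$, we have $\lambda \ge 4 > 0$. A direct computation gives $\langle b+c,b+c\rangle_L = -2+2\langle b,c\rangle_L = -\lambda$, so $\langle m,m\rangle_L=-1$. Moreover $m_0>0$ because $b_0,c_0>0$, so $m\in\mathbb{H}^n$. To see that $m$ is the geodesic midpoint, we use that geodesics of $\mathbb{H}^n$ are intersections with $2$-planes through the origin. The point $m$ lies in $\operatorname{span}(b,c)$ and is equidistant from $b$ and $c$, since $\langle m,b\rangle_L = \langle m,c\rangle_L = (-1+\langle b,c\rangle_L)/\sqrt{\lambda}$. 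A point of the geodesic segment that is equidistant from both endpoints is its midpoint, because distance along the segment is monotone. Alternatively, one can check that $s_m$ swaps $b$ and $c$.

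Finally we compute the solution:
\[
s_m(a) = -a - 2\langle a,m\rangle_L m = -a - \frac{2\langle a,b+c\rangle_L}{\lambda}(b+c) = -a - \frac{\langle a,b+c\rangle_L}{1-\langle b,c\rangle_L}(b+c).
\]
The printed statement has a plus sign here, so we will double-check the sign convention of $s_m$. With $s_m(x) = -2\langle x,m\rangle_L m - x$ and $x=m$, we get $s_m(m) = 2m - m = m$, and the differential at $m$ is $-\mathrm{Id}$, since tangent vectors $v$ at $m$ satisfy $\langle v,m\rangle_L=0$. So this convention is the correct one, and the formula derived from it carries a minus sign.

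The main obstacle is therefore reconciling this sign with the statement. We expect the statement's plus sign to be a typo. A sanity check supports this: with $a=b=c$, the correct answer must be $d=a$, while the plus-sign formula would give $-a + \frac{-2}{2}(2a) = -3a$. The minus-sign formula gives $-a + 2a = a$, as required. We would present the derivation with the sign forced by $s_m$ and note this discrepancy.
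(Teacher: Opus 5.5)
Your proof is correct and follows the route the paper implicitly relies on: Theorem~\ref{thm:characterization-symmetric-spaces} applied on the Hadamard manifold $\mathbb{H}^n$, where the midpoint is the single point $m=(b+c)/\sqrt{2-2\langle b,c\rangle_L}$, combined with the stated symmetry $s_m(x)=-2\langle x,m\rangle_L m-x$. Your check $s_m(b)=c$ is the cleanest way to confirm that $m$ is the midpoint. You are also right that these same ingredients force $d=-a-\frac{\langle a,b+c\rangle_L}{1-\langle b,c\rangle_L}(b+c)$, so the plus sign in the printed statement is a typo: for $a=b=c$ it gives $-3a\notin\mathbb{H}^n$ instead of the reflexive solution $a$.
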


\subsection{Analogies on  Symmetric Positive Definite Matrices}

A matrix $M$ is \emph{symmetric positive definite} if and only if $M^T = M$ and $x^T M x > 0$ for all $x \in \mathbb{R}^n \setminus \lbrace 0 \rbrace$. We denote by $SPD_n$ the space of symmetric positive definite matrices of size $n$. Symmetric positive definite matrices play an important role in various applications of machine learning such as metric learning~\cite{zadeh2016geometric} or sparse coding~\cite{cherian2016riemannian}. 

The tangent space $T_{\Sigma}SPD_n$ is the space of symmetric matrices. An important family of metrics for $SPD_n$ is the affine-invariant metrics of the form $g_{\Sigma}^{AI}(V,W) = \alpha Tr(\Sigma^{-1} V \Sigma^{-1} W) + \beta Tr(\Sigma^{-1} V) Tr(\Sigma^{-1} W)$, with $\alpha > 0$ and $\alpha + n \beta > 0$. \citet{thanwerdas2019affine} show that the manifold $(SPD_n, g^{AI})$ is a Riemannian symmetric space with symmetry $s_{\Sigma}(\Lambda) = \Sigma \Lambda^{-1} \Sigma$. The midpoints between $B$ and $C$ are characterized by $C = s_M(B) = M B^{-1} M$. The equation has a unique solution $M = B \# C =  B^{1/2} (B^{-1/2} C B^{-1/2})^{1/2} B^{1/2}$. 

\begin{corollary}
The analogical equation $A : B :: C : X$ on the space of symmetric positive definite matrices $SPD_n$ admits a unique solution:
\begin{equation}
\Sol{SPD_n} = \left\lbrace (B \# C) A^{-1} (B \# C) \right\rbrace
\end{equation}
\label{corr:spd}
\end{corollary}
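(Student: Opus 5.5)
The plan is to apply Theorem~\ref{thm:characterization-symmetric-spaces} directly: $\Sol{SPD_n}(A,B,C) = \lbrace s_M(A) : M \in \Mid{B}{C}\rbrace$. Since $(SPD_n, g^{AI})$ is a Riemannian symmetric space with $s_\Sigma(\Lambda) = \Sigma\Lambda^{-1}\Sigma$ \cite{thanwerdas2019affine}, it then suffices to show two things. First, $\Mid{B}{C} = \lbrace B \# C\rbrace$. Second, substituting this midpoint gives $s_{B\#C}(A) = (B\#C)A^{-1}(B\#C)$, which is the claimed formula and in particular a single point.

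For the midpoint set, I would use that $(SPD_n, g^{AI})$ is a Hadamard manifold: it is complete, simply connected, and has nonpositive curvature. This is classical for $\beta = 0$. For general $\beta$, the metric splits orthogonally into a scaling ($\det$) direction and the $\beta=0$ part on $\det = 1$ matrices, and each factor is nonpositively curved. By Cartan--Hadamard, $\exp_B$ is then a diffeomorphism. Hence there is a unique $w \in T_B SPD_n$ with $\exp_B(w) = C$, and $\Mid{B}{C}$ is a singleton $\lbrace M\rbrace$.

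To identify $M$, I would argue as follows. The geodesic through $M$ is reversed by $s_M$, and $M$ is the midpoint of the geodesic from $B$ to $C$, so $s_M(B) = C$, that is, $MB^{-1}M = C$. I would then solve this Riccati-type equation. Conjugating by $B^{-1/2}$ and setting $N = B^{-1/2} M B^{-1/2} \in SPD_n$ turns it into $N^2 = B^{-1/2} C B^{-1/2}$. Uniqueness of the SPD square root gives $N = (B^{-1/2}CB^{-1/2})^{1/2}$, hence $M = B\#C$. Alternatively, one can check directly that the affine-invariant geodesic $t \mapsto B^{1/2}(B^{-1/2}CB^{-1/2})^tB^{1/2}$ (valid for all $\alpha,\beta$, since the geodesics of the affine-invariant family coincide) evaluated at $t=1/2$ equals $B\#C$.

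The main obstacle is justifying uniqueness of the midpoint for the whole affine-invariant family, i.e.\ the Hadamard property when $\beta \neq 0$. I would first try to cite that geodesics, and hence exp and log, do not depend on $(\alpha,\beta)$, which reduces everything to the standard $\beta=0$ case. The remaining step is a direct substitution.
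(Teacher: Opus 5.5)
Your proposal is correct and follows the paper's route: invoke Theorem~\ref{thm:characterization-symmetric-spaces} with $s_\Sigma(\Lambda)=\Sigma\Lambda^{-1}\Sigma$, characterize midpoints by $MB^{-1}M=C$, solve this to get $M=B\#C$, and substitute into $s_M(A)$. The Cartan--Hadamard step for $\beta\neq 0$ is harmless but unnecessary: as in the paper, uniqueness of the SPD solution of $MB^{-1}M=C$ (which you prove via $N=B^{-1/2}MB^{-1/2}$) already forces every midpoint to equal $B\#C$, and existence of a midpoint follows from completeness or from your explicit geodesic.
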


%% file: sections/shape.tex
In this section, we explore how the introduced analogies apply to shape spaces. We will start with the simple case of Kendall's space of triangles~\cite{kendall1984shape}, which reduces to the case of the sphere $\mathbb{S}^2$, and will then extend to 3D shapes, with potential applications in computer graphics.

\subsection{A Short Introduction to Shape Spaces}

Many tasks in computer vision, medical imaging and computer graphics rely on comparing geometric objects rather than individual pixels. Examples include object recognition, motion analysis and statistical shape analysis. In these settings, the quantity of interest is the shape of an object, independently of transformations such as translation, rotation or scaling.

A natural way to formalize this idea is to represent an object by a finite set of landmarks and to identify configurations that differ only by these nuisance transformations. The resulting quotient space, called a shape space, is endowed with a Riemannian structure, allowing distances, geodesics and statistical quantities such as means to be defined intrinsically~\cite{kendall1984shape}.

Shapes can also be modeled as continuous curves or surfaces~\cite{srivastava2016functional}. In this setting, a shape is represented by an embedding of a reference manifold into the ambient space, and equivalent parameterizations are identified through the action of the diffeomorphism group. The resulting quotient spaces inherit a rich differential geometric structure. 

\subsection{Space of Triangles}

A landmark configuration consisting of three points in the plane can be interpreted as a triangle. Since the shape of a triangle should be independent of its position, orientation and size, the construction of~\citet{kendall1984shape} first removes translations by centering the landmarks, then normalizes the configuration to unit size, and finally identifies configurations that differ only by a planar rotation.

Let $(z_1, z_2, z_3) \in \mathbb{C}^3$ denote the coordinates of the three vertices of the triangle, interpreted as complex numbers. Quotienting by the translation and scale invariance results in the two-dimensional preshape space $\lbrace \mathbf{z} \in \mathbb{C}^3 \, : \, \sum_{i=1}^3 z_i = 0, \| \mathbf{z} \| = 1 \rbrace$. Quotienting this space by the action of planar rotations  $z \sim e^{i\theta} z$ identifies preshapes representing the same triangle. The resulting quotient is Kendall's shape space of planar triangles, which is isometric to the sphere~$\mathbb{S}^2$. Consequently, the result established in Corollary~\ref{corr:analogies-sphere} for the sphere directly applies to planar triangles.

\subsection{Three-Dimensional Shapes}

As an illustration of the generalization capacity of our method, we consider analogical reasoning on three-dimensional triangle meshes. This application is closely related to the classical problem of deformation transfer~\cite{sumner2004deformation}, where a deformation observed on one shape is transferred to another. Our objective is not to compete with specialized deformation transfer methods, but rather to demonstrate that our general analogy relation naturally extends to a challenging geometric domain without requiring an application-specific formulation.

For the geometric operations on shape spaces, we rely on the \texttt{Morphomatics} library~\cite{Morphomatics}. The library represents triangular meshes through fundamental coordinates, providing a Riemannian manifold structure on which we directly apply the analogy solver of Proposition~\ref{prop:riemfunction-based}. Given three meshes $A$, $B$, and $C$, the missing mesh $D$ is obtained by solving the corresponding analogical equation.

The experiments are conducted on animal meshes derived from the SMALR model~\cite{zuffi2018lions}. Since all meshes are generated from the same template, they share an identical triangulation and establish a one-to-one correspondence between vertices. This common topology satisfies the assumptions of the fundamental-coordinate representation; otherwise, a preprocessing step would be required to establish vertex correspondences.

Figure~\ref{fig:placeholder} (domain 3) provides a typical example. Meshes $A$ and $B$ represent the same dog in a reference pose and a transformed pose, while mesh $C$ represents a cow in the reference pose. The computed mesh $D$ transfers the deformation $A \rightarrow B$ to the cow while preserving its geometric characteristics. Although not competitive with dedicated deformation transfer methods, the successful transfer and the low execution time (approximately $2\,\mathrm{s}$) illustrate the ability of the proposed framework to generalize beyond the domains for which it was originally designed.

%% file: sections/categorical-distributions.tex
The space of $n$-dimensional categorical distributions is defined as the simplex:
\begin{equation}
    \Delta^{n-1} = \left\lbrace p \in (0, 1)^n \mid p_i > 0 \quad \text{and} \quad \sum_{i=1}^n p_i = 1 \right\rbrace
\end{equation}

Several geometries have been proposed on the simplex of categorical distributions. A comprehensive comparison is given by \citet{nielsen2019clustering}. In this section, we consider two Riemannian geometries: the Fisher-Rao metric~\cite{nielsen2020elementary} and the Aitchison metric~\cite{egozcue2006hilbert}. Other geometries, such as the Hilbert geometry, are not Riemannian and do not enter the scope of this paper. Future works should investigate how our framework can be extended to such geometries.

\subsection{Fisher-Rao Metric}

Information Geometry~\cite{nielsen2020elementary} introduces Fisher-Rao metric as a natural metric choice for the manifolds of parametric probability distributions. In the specific case of categorical distribution, the Fisher-Rao metric is defined as:
\begin{equation}
    g_{\text{FR},P}(v_1, v_2) = \sum_{i=1}^n \frac{v_1^i v_2^i}{p_i}.
\end{equation}
The square-root mapping $\phi: \Delta^{n-1} \rightarrow \mathbb{S}_{2,+}^{n-1}$ defined as $\phi(p) = (2 \sqrt{p_1}, \dotsc, 2 \sqrt{p_n})$ is an isometry from the Fisher-Rao simplex to the positive orthant of the sphere of radius $2$, denoted $\mathbb{S}_{2,+}^{n-1}$.  Using the result of Proposition~\ref{prop:invariance-isometry}, we can show the following result:

\begin{proposition}
Let $a, b, c \in \Delta^{n-1}$. Define
\begin{equation}
    r = 2 \frac{\langle\sqrt{a}, \sqrt{b} + \sqrt{c} \rangle}{\|\sqrt{b} + \sqrt{c}\|^2} (\sqrt{b} + \sqrt{c}) - \sqrt{a}.
\end{equation}
The analogical equation $a : b ::_{(\Delta^{n-1}, g_{\text{FR}})} c :x$ admits a solution if and only if $r_i > 0$ for all $i$. When the condition holds, the solution is unique and is given by $d = (r_1^2, \dotsc, r_n^2)$. 
\label{prop:fisher-rao}
\end{proposition}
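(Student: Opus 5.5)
The plan is to transport the problem to the sphere through the square-root isometry $\phi$, solve it there with Corollary~\ref{corr:analogies-sphere-radius} (radius $R=2$), and pull the answer back. One subtlety is that $\phi$ is an isometry onto the \emph{open} positive orthant $\mathbb{S}^{n-1}_{2,+}$, not onto the whole sphere. The analogy in Equation~\ref{eq:riem_pa_general} is defined intrinsically, using exponential maps of $(\Delta^{n-1}, g_{\text{FR}})$. So the first step is to describe the geodesics of the simplex: they are the images under $\phi^{-1}$ of great-circle arcs of the sphere that stay inside the open orthant. Since $\phi$ is an isometry, the argument of Proposition~\ref{prop:invariance-isometry} applies in both directions, to $\phi$ and to $\phi^{-1}$ on its image. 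Hence $a:b::c:d$ in the simplex if and only if there are great-circle arcs, contained in the open orthant, from $\phi(a)$ to $\phi(d)$ and from $\phi(b)$ to $\phi(c)$ that share their midpoint.

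Next I solve the spherical problem. Write $A=\phi(a)=2\sqrt a$, and similarly $B$, $C$. The points $b,c$ have strictly positive coordinates, so $\langle B,C\rangle>0$ and $C\neq -B$. The arc from $B$ to $C$ inside the orthant is therefore the short arc. Its midpoint is $M = 2(\sqrt b+\sqrt c)/\|\sqrt b+\sqrt c\|$, which lies in the orthant. It is the only admissible midpoint: the other arcs between $B$ and $C$ on the full sphere are long arcs, which go through $-M$ and leave the orthant. Applying the symmetry $s_M(x)=\tfrac{2}{R^2}\langle M,x\rangle M - x$ with $R=2$ to $A$ gives
\[
s_M(A) = 2\,\frac{\langle 2\sqrt a, \sqrt b+\sqrt c\rangle}{\|\sqrt b+\sqrt c\|^2}(\sqrt b+\sqrt c) - 2\sqrt a = 2r .
\]
By Theorem~\ref{thm:characterization-symmetric-spaces}, the only candidate for $\phi(d)$ is $2r$. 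So if the equation has a solution, that solution is unique and equals $\phi^{-1}(2r) = (r_1^2,\dots,r_n^2)$. Note that $\|2r\|=2$ because $s_M$ preserves the sphere, so $\sum_i r_i^2 = 1$ automatically.

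The remaining step, which I expect to be the main obstacle, is the existence condition, i.e.\ showing that validity in the simplex is equivalent to $r_i>0$ for all $i$. For necessity: if a solution $d$ exists then $\phi(d)=2r$ lies in the open orthant, so every $r_i>0$. For sufficiency: assume every $r_i>0$. The geodesic from $A$ through $M$ to $s_M(A)=2r$ is $t\mapsto \exp_M((2t-1)\log_M A)$, and it must be checked to lie entirely in the open orthant. My approach is to show this arc is the short great-circle arc from $A$ to $2r$. Both endpoints lie in the positive orthant, and the arc is symmetric about $M$, which also lies in the orthant. The orthant is geodesically convex for short arcs: such an arc is a positive combination $\alpha A + \beta(2r)$, normalized, and is therefore coordinatewise positive. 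Hence it suffices that $\langle A,2r\rangle > -\|A\|\,\|2r\|$, i.e.\ that the two endpoints are not antipodal, and this holds because both are positive vectors. Finally, by Proposition~\ref{prop:length-proportion-geodesic}, the midpoint of this arc is $M$. Pulling back by $\phi^{-1}$ then yields tangent vectors $v,w$ satisfying Equation~\ref{eq:riem_pa_general}.
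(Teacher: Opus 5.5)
Your proposal follows the same route as the paper: transport by $\phi$ to the sphere of radius $2$, obtain $D=s_M(A)=2r$ from Corollary~\ref{corr:analogies-sphere-radius}, and pull back by $\phi^{-1}$; it is in fact more careful than the paper's proof, which passes from the sphere to the open orthant without checking that the relevant arcs stay inside it. One step needs tightening: non-antipodality of $A$ and $2r$ does not by itself make the arc through $M$ the short one (it would be the long arc if $\langle A,M\rangle<0$), but here $\langle A,M\rangle>0$ because both vectors are positive, so that arc spans an angle $2\theta<\pi$ with $\theta<\pi/2$ the angle between $A$ and $M$ (also, $\exp_M((2t-1)\log_M A)$ runs from $2r$ to $A$, so use $1-2t$).
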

\begin{proof}
Let $A = \phi(a)$, $B = \phi(b)$ and $C = \phi(c)$. Since $B, C \in \mathbf{S}_{2,+}^{n-1}$, the two points cannot be antipodal. Therefore, they admit a single midpoint $M = 2 \frac{\sqrt{b} + \sqrt{c}}{\|\sqrt{b} + \sqrt{c}\|}$. We observe that $M \in \mathbf{S}_{2,+}^{n-1}$, so $A$ and $M$ are not antipodal. By Corollary~\ref{corr:analogies-sphere-radius}, the solution of the analogical equation on $\mathbb{S}_2^{n-1}$ is $D = 2r$. It is a solution on $\mathbf{S}_{2,+}^{n-1}$ if and only if each $r_i > 0$. The solution on $\Delta^{n-1}$ is obtained by applying $\phi^{-1}$. 
\end{proof}

The fact that all analogical equations do not necessarily have a solution with Fisher-Rao metric can be explained by the fact that $(\Delta^{n-1}, g_{\text{FR}})$ is not geodesically complete: for a given $p \in \Delta^{n-1}$ and $u \in T_p\Delta^{n-1}$, the function $t \mapsto \exp_p(tu)$ is not defined for all $t \in \mathbb{R}$. Consequently, geodesics may leave the manifold in finite time, preventing some geodesic symmetries from being defined. This explains why analogical equations are not always solvable.

\subsection{Aitchison Metric}

Another approach consists in finding a diffeomorphism $\psi: \Delta^{n-1} \rightarrow \mathbb{R}^{n-1}$ to endow $\Delta^{n-1}$ with a pullback of the Euclidean metric. The solution proposed by \citet{aitchison1982statistical} consists in introducing the centered log ratio $\text{clr}(p) = \left(\log \frac{p_1}{f(p)}, \dotsc, \log \frac{p_n}{f(p)} \right)$ with $f(p) = \left( \prod_{i=1}^n p_i \right)^{1/n}$ the geometric mean of the vector $p$. The image of the clr mapping is the hyperplane $H = \left\lbrace x \in \mathbb{R}^n \, : \, \sum_{i=1}^n x_i = 0 \right\rbrace$, and therefore defines a diffeomorphism $\psi: \Delta^{n-1} \rightarrow \mathbb{R}^{n-1}$. We define $g_A = \psi^*g_E$ as the pullback of the Euclidean metric $g_E$ on $\mathbb{R}^{n-1}$. Consequently, $(\Delta^{n-1}, g_A)$ is isometric to $(\mathbb{R}^{n-1}, g_E)$, and we can apply Proposition~\ref{prop:invariance-isometry}. 

\begin{proposition}
Let $a, b, c \in \Delta^{n-1}$. The analogical equation $a : b ::_{(\Delta^{n-1}, g_A)}c : x$ has a unique solution:
\begin{equation}
    d = \left(\frac{c_1 b_1 / a_1}{\sum_{i=1}^n c_i b_i / a_i}, \dotsc, \frac{c_n b_n / a_n}{\sum_{i=1}^n c_i b_i / a_i} \right)
    \label{eqn:solution-categorical-aitchison}
\end{equation}
\label{prop:aitchison}
\end{proposition}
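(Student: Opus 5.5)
The plan is to transport the analogical equation to Euclidean space through the isometry $\psi$ and solve it there. On the Euclidean side the solution is the parallelogram rule, and we pull it back with the inverse of the centered log ratio.

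First I would note that, in $(\mathbb{R}^{n-1}, g_E)$ or equivalently in the hyperplane $H\subset\mathbb{R}^n$ with the induced Euclidean metric, the analogy of Proposition~\ref{prop:general-form-analogy} is the arithmetic proportion. The space is geodesically convex, so the general analogy coincides with the one of Proposition~\ref{prop:riem_pa}. In the symmetric-space characterization (Theorem~\ref{thm:characterization-symmetric-spaces}) we have $s_m(x)=2m-x$ and a unique midpoint $m=(b+c)/2$. Hence the unique solution of $A:B::C:X$ is $X=B+C-A$. Working directly in $H$ via $\text{clr}$ rather than choosing an orthonormal basis is harmless, since any linear isometry $H\cong\mathbb{R}^{n-1}$ commutes with affine combinations.

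Next I would use Proposition~\ref{prop:invariance-isometry} in both directions. It applies to $\psi$ and also to $\psi^{-1}$, which is an isometry as well. This gives $a:b::c:d$ in $(\Delta^{n-1},g_A)$ if and only if $\text{clr}(a):\text{clr}(b)::\text{clr}(c):\text{clr}(d)$ in $H$. Uniqueness and existence then transfer: the only solution is $d=\text{clr}^{-1}(\text{clr}(b)+\text{clr}(c)-\text{clr}(a))$, and this lies in $\Delta^{n-1}$ because $\text{clr}$ is onto $H$ and $H$ is closed under the affine combination.

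Finally I would compute $d$ explicitly. Componentwise,
\[
\text{clr}(b)_i+\text{clr}(c)_i-\text{clr}(a)_i=\log\frac{b_ic_i/a_i}{f(b)f(c)/f(a)}.
\]
The inverse of $\text{clr}$ on $H$ is the closure $x\mapsto \exp(x)/\sum_j\exp(x_j)$. Applying it, the constant $f(b)f(c)/f(a)$ cancels in the normalization, which yields Equation~\ref{eqn:solution-categorical-aitchison}.

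The only step needing care is checking that this closure map really inverts $\text{clr}$. For $p\in\Delta^{n-1}$ we have $\exp(\text{clr}(p))=p/f(p)$, and normalizing returns $p$ since $\sum_i p_i=1$. Conversely, scaling-invariance of $\text{clr}$ shows that $\text{clr}\circ\text{closure}$ is the identity on $H$. I expect no real obstacle beyond this bookkeeping.
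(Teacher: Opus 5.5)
Your proposal is correct and follows the same route as the paper: you transport the equation to $H$ through $\text{clr}$ using Proposition~\ref{prop:invariance-isometry}, solve it there by the parallelogram rule $\text{clr}(b)+\text{clr}(c)-\text{clr}(a)$, and pull back with the closure map, where the geometric-mean factor cancels. You also spell out steps the paper leaves implicit, namely the converse direction via $\psi^{-1}$, the Euclidean solution via the symmetric-space characterization, and the check that closure inverts $\text{clr}$ on $H$.
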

\begin{proof}
It follows directly from Proposition~\ref{prop:invariance-isometry} that the analogy $a : b ::_{(\Delta^{n-1}, g_A)}c : d$ holds if and only if $\text{clr}(a) : \text{clr}(b) ::_{(H, g_E)} \text{clr}(c) : \text{clr}(d)$. Consequently, the equation has a unique solution satisfying for each component $i$:
\[
d_i = \frac{\exp(\text{clr}(c)_i + \text{clr}(b)_i - \text{clr}(a)_i)}{\sum_{j=1}^n \exp(\text{clr}(c)_j + \text{clr}(b)_j - \text{clr}(a)_j)}
\]
which can be developed into the solution of Equation~\ref{eqn:solution-categorical-aitchison}.
\end{proof}

\subsection{Application}

To evaluate the proposed analogical framework on categorical distributions, we consider a preference transfer task based on the MovieLens 1M dataset. 

\paragraph{Task description.}
For each demographic cohort $g$ (e.g. age, gender, occupation) and movie category $c$ present in the dataset, we compute the empirical distribution of ratings $p_{gc} = (p_{gc}(1), \dotsc, p_{gc}(5))$ where $p_{gc}(r)$ is the probability that a member of the cohort $g$ gives rating $r$ to a movie of category $c$. To avoid zero probabilities, each distribution is estimated using symmetric Dirichlet smoothing.

\begin{figure}
    \centering
    \includegraphics[width=\linewidth]{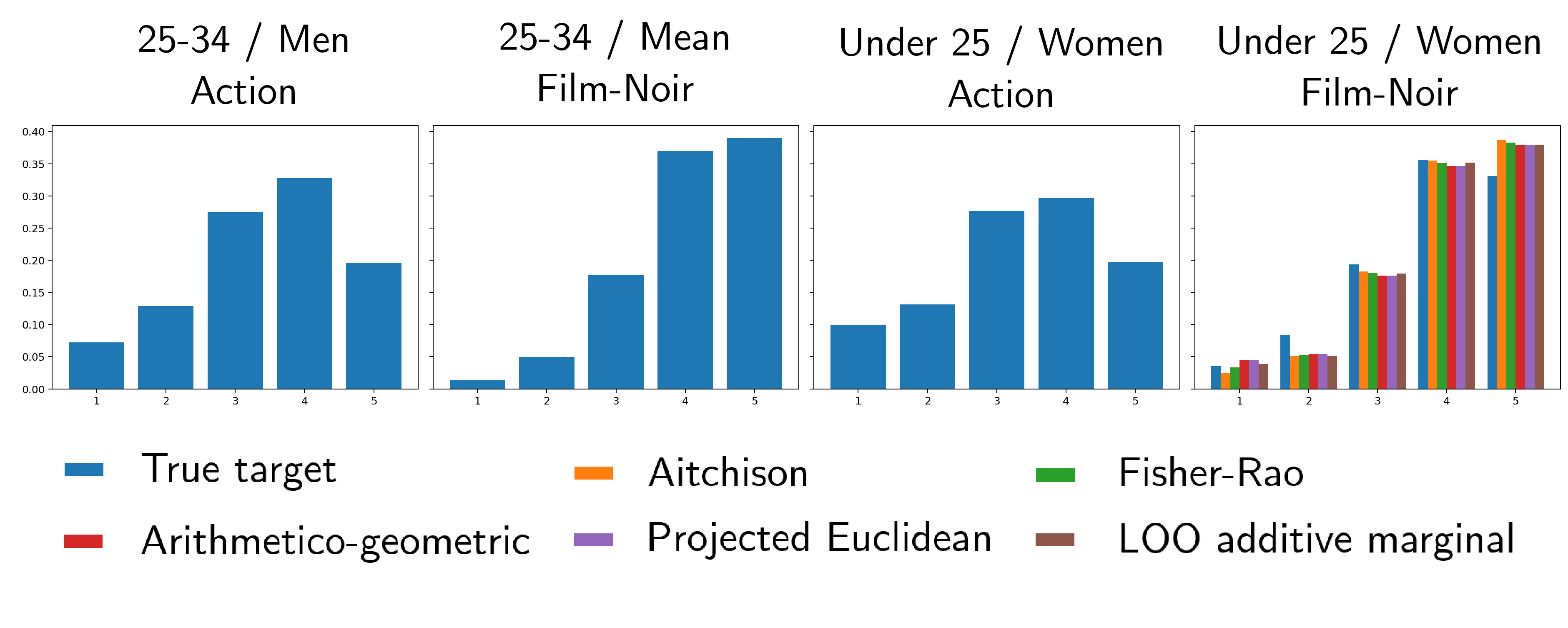}
    \caption{An example of a rating profile transfer between two cohorts ($25-34$ men and $< 25$ women) and two categories (action and film noir). A rating profile is represented as a distribution $p \in \Delta^4$.}
    \label{fig:example}
\end{figure}

The objective is to predict the rating distribution of a target cohort on a target category from three observed distributions. More precisely, given a reference cohort $g_1$, a target cohort $g_2$, a reference category $c_1$ and a target category $c_2$, we consider the analogical equation $p_{g_1, c_1} : p_{g_1, c_2} :: p_{g_2, c_1} : x$
whose solution is used as an estimate of the unknown distribution $p_{g_2, c_2}$ (Figure~\ref{fig:example}). Intuitively, the transformation induced by changing from category $c_1$ to $c_2$ for the reference cohort is transferred to the target cohort.

\begin{table*}[t]
\centering
\caption{Mean Jensen-Shannon divergence ($\times 10^{-3}$) on the MovieLens preference transfer task.
Lower is better. Values in brackets denote 95\% bootstrap confidence intervals.}
\label{tab:movielens-js}
\small
\setlength{\tabcolsep}{3.5pt}
\begin{tabular}{l|cc|cccc}
Experiment & LOO Genre & LOO Add. & Fisher-Rao & Aitchison & Prade \\
\hline
\hline

Age & $1.323_{[1.108,\,1.560]}$ & $0.366_{[0.259,\,0.487]}$ & $\mathbf{0.274}_{[0.190,\,0.373]}$ & $0.314_{[0.210,\,0.451]}$ & $0.294_{[0.208,\,0.393]}$ \\

Age + Gender & $1.731_{[1.449,\,2.027]}$ & $0.842_{[0.641,\,1.064]}$ & $\mathbf{0.679}_{[0.523,\,0.863]}$ & $0.708_{[0.546,\,0.881]}$ & $0.686_{[0.531,\,0.863]}$ \\

Occupation
& $1.962_{[1.690,\,2.244]}$ & $1.134_{[0.919,\,1.408]}$ & $0.956_{[0.786,\,1.164]}$ & $\mathbf{0.931}_{[0.766,\,1.128]}$ & $0.990_{[0.822,\,1.181]}$ \\

Single Genre
&
$4.360_{[3.538,\,5.216]}$
&
$3.432_{[2.498,\,4.527]}$
&
$\mathbf{2.649}_{[1.984,\,3.411]}$
&
$2.865_{[2.152,\,3.655]}$
&
$2.677_{[1.961,\,3.462]}$
\\

Independent Users
&
$2.431_{[2.079,\,2.823]}$
&
$2.040_{[1.667,\,2.464]}$
&
$\mathbf{1.899}_{[1.554,\,2.291]}$
&
$1.949_{[1.586,\,2.355]}$
&
$1.904_{[1.568,\,2.272]}$
\\

Occupation + Genre + Decade
&
$7.378_{[6.993,\,7.779]}$
&
$8.138_{[7.701,\,8.585]}$
&
$\mathbf{7.350}_{[6.984,\,7.738]}$
&
$7.383_{[7.003,\,7.767]}$
&
$7.368_{[6.981,\,7.745]}$
\end{tabular}
\end{table*}
\paragraph{Models and baselines.}

We compare three analogical models: the Fisher-Rao (Proposition~\ref{prop:fisher-rao}), Aitchison (Proposition~\ref{prop:aitchison}) and the arithmetico-geometric analogy of~\citet{prade2025analogical}. 
As baselines, we consider two leave-one-out predictors: the average distribution of the target category, and an additive model combining cohort and category marginal distributions.

\paragraph{Metrics.}
Performance is evaluated using the Jensen Shannon divergence. Statistical significance is assessed through paired permutation tests over all hidden cohort-category pairs. Bootstrap confidence intervals are reported for all average performance measures.

\paragraph{Conditions.}
We evaluate the proposed models under six experimental conditions of increasing difficulty. \textbf{(1) Age:} Users are partitioned according to their age group, and movie categories correspond to the standard MovieLens genres. \textbf{(2) Age + Gender.} Cohorts are refined by combining age group and gender. \textbf{(3) Occupation.} Cohorts are defined by the users' occupation. Since MovieLens contains a larger number of occupations than age groups, this setting increases the diversity of user profiles and the complexity of the transfer task. \textbf{(4) Single Genre.} Users are partitioned into age groups. The evaluation is restricted to movies associated with a single genre, reducing taste ambiguity. \textbf{(5) Independent Users.} Users are randomly partitioned into disjoint training and evaluation subsets. This prevents the same users from contributing to both the prediction and the ground truth, thereby providing a stricter evaluation of the analogical transfer. \textbf{(6) Occupation + Genre + Decade.} The most challenging setting combines occupation-based cohorts with movie categories defined by genre and release decade, producing a much finer partition of the data and reducing the number of observations available for each distribution.

\paragraph{Results.}

Table~\ref{tab:movielens-js} reports the prediction performance on the six experimental settings. Across all experiments, the analogical approaches globally outperform both leave-one-out baselines. In particular, the gain over the genre-only baseline is substantial in the standard demographic transfer tasks, showing that the proposed analogies successfully transfer rating profiles between different user cohorts rather than relying solely on the target movie category.

Among the proposed geometries, Fisher-Rao achieves the lowest average Jensen-Shannon divergence in five of the six experiments, while the Aitchison geometry performs best on the occupation-based setting. The arithmetico-geometric analogy remains consistently close to the two Riemannian approaches. The relatively small differences between the analogical models suggest that the proposed framework is robust with respect to the underlying geometry while consistently outperforming the non-analogical baselines.

As expected, prediction becomes increasingly difficult as the transfer task is made more challenging. Moving from age-based cohorts to finer demographic partitions and more specific movie categories results in a gradual increase of the Jensen-Shannon divergence for all methods. Nevertheless, the analogical models preserve their advantage over the baselines in the first five experimental settings. The final experiment, combining occupation, genre and decade, represents the most challenging configuration. In this case, the performance of the analogical models becomes comparable to that of the genre-only baseline, suggesting that when both cohorts and categories become highly specific, the amount of transferable information naturally decreases.

Overall, these experiments demonstrate that proportional analogies provide an effective mechanism for transferring preference distributions across demographic groups. The improvements are observed consistently across a broad range of experimental settings, indicating that the proposed analogical constructions capture meaningful structural relationships between categorical probability distributions.

%% file: sections/conclusion.tex
We introduced an intrinsic proportional analogy relation on Riemannian manifolds based on the notion of geodesic midpoint. The proposed construction satisfies the fundamental properties of proportional analogies while naturally reflecting the geometry of the underlying manifold. We established several theoretical characterizations of this relation, including equivalent formulations on geodesically convex manifolds and invariance under isometries. We further showed that, on Riemannian symmetric spaces, the analogy admits a particularly simple form, leading to closed-form characterizations on several important manifolds such as spheres, hyperbolic spaces, the manifold of symmetric positive definite matrices, shape spaces, and manifolds of probability distributions.

More generally, this work provides a unified geometric framework for extending proportional analogies beyond symbolic domains and Euclidean vector spaces. Several directions remain to be explored, including extensions to non-Riemannian geometries, the development of efficient numerical solvers on general manifolds, and empirical investigations of analogy-based learning methods. We believe that these developments could lead to new applications in transfer learning, meta-learning, sparse coding, and geometric data augmentation.